\documentclass{article}

\pdfoutput=1

\usepackage{EMNLP2022}

\usepackage{times}
\usepackage{latexsym}

\usepackage[T1]{fontenc}

\usepackage[utf8]{inputenc}

\usepackage{microtype}

\usepackage{inconsolata}

\usepackage{xcolor}
\usepackage{multirow}
\usepackage[pdftex]{graphicx}
\usepackage{amsmath}
\usepackage{amssymb}
\usepackage{amsthm}
\usepackage[ruled,vlined,linesnumbered]{algorithm2e}
\usepackage{enumitem}
\usepackage{booktabs}
\usepackage{listings}

\theoremstyle{plain}
\newtheorem{theorem}{Theorem}[section]

\newtheorem{lemma}[theorem]{Lemma}

\theoremstyle{definition}
\newtheorem{definition}[theorem]{Definition}

\theoremstyle{remark}

\usepackage{pifont}
\newcommand{\cmark}{\ding{51}}
\newcommand{\xmark}{\ding{55}}

\newcommand\ti[1]{\textit{#1}}

\newcommand\tf[1]{\textbf{#1}}
\newcommand\ttt[1]{\texttt{#1}}

\newcommand{\smallsec}[1]{\paragraph{#1.}}

\newcommand{\name}{NLProofS }
\newcommand{\namenospace}{NLProofS}

\title{Generating Natural Language Proofs with Verifier-Guided Search}

\author{Kaiyu Yang \and Jia Deng \and Danqi Chen \\
  Department of Computer Science \\
  Princeton University \\
  \texttt{\{kaiyuy,jiadeng,danqic\}@cs.princeton.edu}}

\begin{document}

\maketitle

\begin{abstract}
Reasoning over natural language is a challenging problem in NLP. In this work, we focus on proof generation: Given a hypothesis and a set of supporting facts, the model generates a proof tree indicating how to derive the hypothesis from supporting facts. Compared to generating the entire proof in one shot, stepwise generation can better exploit the compositionality and generalize to longer proofs but has achieved limited success on real-world data. Existing stepwise methods struggle to generate proof steps that are both logically valid and relevant to the hypothesis. Instead, they tend to hallucinate invalid steps given the hypothesis. In this paper, we present a novel stepwise method, \namenospace~(\underline{N}atural \underline{L}anguage \underline{Proof} \underline{S}earch), which learns to generate relevant steps conditioning on the hypothesis. At the core of our approach, we train an independent \emph{verifier} to check the validity of the proof steps to prevent hallucination. Instead of generating steps greedily, we search for proofs maximizing a global proof score judged by the verifier. \name achieves state-of-the-art performance on EntailmentBank and RuleTaker. Specifically, it improves the correctness of predicted proofs from 27.7\% to 33.3\% in the distractor setting of EntailmentBank, demonstrating the effectiveness of \name in generating challenging human-authored proofs.\footnote{The code is available at \url{https://github.com/princeton-nlp/NLProofS}.}
\end{abstract}

\section{Introduction}

A fundamental goal of AI since its inception is automated reasoning~\cite{mccarthy1960programs}: given explicitly provided knowledge as assumptions, we want the system to draw logically valid conclusions. Research in automated reasoning has traditionally focused on structured domains such as formal logic~\cite{robinson2001handbook}. On the other hand, recent work suggests that free-form natural language can be a suitable vehicle for reasoning~\cite{clark2021transformers,dalvi2021explaining}, because natural language represents knowledge without requiring labour-intensive formalization.
However, reasoning in natural language is challenging, as it requires compositional generalization to novel examples~\cite{ruis2020benchmark}---a capability that state-of-the-art large language models struggle with~\cite{rae2021scaling}.

\begin{figure*}[ht]
  \centering
  \includegraphics[width=1.0\linewidth]{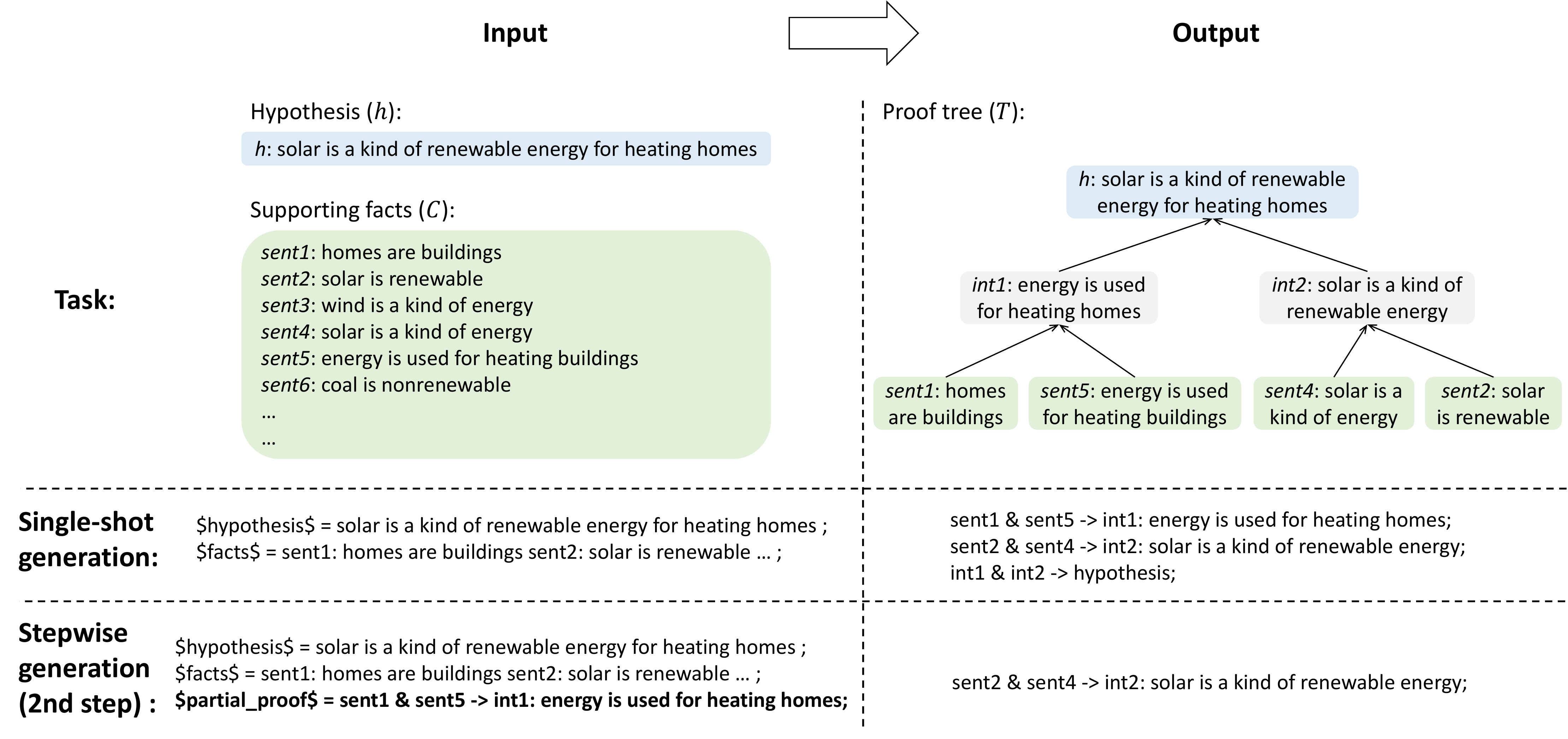}
  \caption{\emph{Top}: In proof generation, given a hypothesis and multiple supporting facts (potentially with redundant facts), the model generates a proof tree, including both the tree structure and the intermediate conclusions (\textit{int1} and \textit{int2}). A common approach encodes the input/output as text sequences and generates the proof in a single-shot (\emph{Middle}) or generates the proof step by step (\emph{Bottom}, showing only one of the three steps) using text-to-text models.
  }
  \label{fig:task}
\end{figure*}

In this work, we focus on proof generation in natural language (Fig.~\ref{fig:task}): given a hypothesis and a set of supporting facts in natural language, the model generates a proof tree indicating how the hypothesis is derived from a subset of the supporting facts. The proof tree may contain intermediate conclusions, which need to be \textit{generated} by the model. Existing methods generate the proof either in a single shot or step by step. Stepwise methods leverage the compositionality of proofs, making it easier for the model to learn and generalize to longer proofs~\cite{tafjord2021proofwriter}.

However, existing stepwise methods suffer from a trade-off between generating \ti{valid} steps and \ti{relevant} steps. Prior works~\citep{sanyal2022fairr,bostrom2022natural} have observed that, given the hypothesis, the model often learns to hallucinate invalid proof steps leading to the hypothesis, instead of performing valid logical inference (see examples in Table~\ref{table:examples}). To mitigate this issue, previous attempts have restricted the model from accessing the hypothesis, forcing it to generate conclusions based solely on known premises. However, without the hypothesis, the model tends to generate many valid but irrelevant steps.
This problem is especially prominent for real-world natural language proofs. Due to the inherent ambiguity of natural language, the search space for each proof step is much larger than that of simple synthetic tasks. That may explain why stepwise methods have demonstrated superior performance on the simple, synthetic RuleTaker dataset~\citep{clark2021transformers} but not on the more realistic, human-authored EntailmentBank dataset~\citep{dalvi2021explaining}, which is the gap we aim to bridge.

We introduce \namenospace, a novel method for stepwise proof generation. It generates proof steps \emph{conditioning on the hypothesis}, enabling the model to learn to generate only relevant steps. To prevent hallucination, it trains an independent \emph{verifier} based on RoBERTa~\citep{liu2019roberta}, which takes a single step (including multiple premises and one conclusion) as input and produces a score indicating its validity. During inference, instead of generating steps greedily, \name searches for proofs that maximize a proof score aggregating the validity scores of all steps.

We evaluate \name on two benchmarks: RuleTaker~\cite{clark2021transformers} and EntailmentBank~\cite{dalvi2021explaining}. RuleTaker consists of simple, synthetic English sentences generated from templates. In contrast, proofs in EntailmentBank are authored by human experts and are more challenging. They are in unconstrained natural language and exhibit considerable fuzziness and ambiguity, as is typical for reasoning in natural language. \name achieves state-of-the-art performance on both datasets. On EntailmentBank, it outperforms previous best results by a large margin. For example, in the distractor task setting, it improves the accuracy of generating complete proofs from 27.7\% to 33.3\% and the accuracy of identifying relevant supporting facts from 46.1\% to 58.8\%, which demonstrates the effectiveness of our method in generating challenging human-authored proofs.

In addition, we conduct extensive ablations to gain insights. First, we show that the verifier plays a crucial role in generating proofs by providing well-calibrated validity scores. Without the verifier, our method performs worse on EntailmentBank and fails completely on RuleTaker. Second, while generating long proofs remains a major challenge, \name leads to large improvements for long proofs. Third, there is still a large room for future improvement, e.g., by generating more accurate and diverse proof steps as candidates for the search algorithm to explore.

\smallsec{Contributions}
In summary, our contributions are two-fold. First, we introduce \namenospace, a stepwise proof generation method that searches for proofs whose validity is scored by a verifier. It substantially advances state-of-the-art performance on the challenging EntailmentBank dataset. Second, through extensive analyses and ablations, we shed light on the performance improvement and reveal the current bottleneck. Our work is a first step exploring the interplay between verifiers and proof search in generating natural language proofs, and we expect further advancements down the road.

\section{Related Work}


\begin{table*}[ht]
  \footnotesize
  \centering
  \makebox[1 \textwidth][c]{
  \resizebox{1 \textwidth}{!}{
  \begin{tabular}{@{}lccccccc@{}}
    \toprule
    Method & \multirow{2}{*}{Stepwise} & Proof & Generate intermediates & \multirow{2}{*}{Verifier} & Non-local & Evaluated on & No external \\
     &  & direction & w/ hypothesis & & search &  human-authored proofs & data \\
    \midrule
    PRover & \xmark & N/A & No intermediates & \xmark & N/A & \xmark & \cmark \\
    EntailmentWriter & \xmark & N/A & \cmark & \xmark & N/A & \cmark & \cmark \\
    ProofWriter & \cmark & $\rightarrow$ & \xmark & \xmark & \xmark & \xmark & \cmark \\
    FaiRR & \cmark & $\rightarrow$ & \xmark & \xmark & \xmark & \xmark & \cmark \\
    SCSearch & \cmark & $\rightarrow$ & \xmark & \xmark & \xmark & \xmark & \xmark \\
    MetGen & \cmark & Both & \cmark & \xmark & \xmark & \cmark & \xmark \\
    \citet{dalvi2022towards}\textsuperscript{$\dagger$} & \cmark & $\leftarrow$ & \cmark & \cmark & \xmark & \xmark & \xmark \\
    \midrule
    \name (ours) & \cmark & $\rightarrow$ & \cmark & \cmark & \cmark & \cmark & \cmark \\
    \bottomrule
  \end{tabular}
  }
  }
  \caption{A comparison of \name with existing methods for proof generation: PRover~\citep{saha2020prover}, EntailmentWriter~\citep{dalvi2021explaining}, ProofWriter~\citep{tafjord2021proofwriter}, FaiRR~\citep{sanyal2022fairr},  SCSearch~\citep{bostrom2022natural}, MetGen~\citep{hong2022metgen}, and a concurrent work~\cite{dalvi2022towards} marked with $\dagger$. 
  $\rightarrow$ and $\leftarrow$ denote forward/backward stepwise proof generation.
  }
  \label{table:method_comparison}
\end{table*}

\smallsec{Proof generation in natural language}
Table~\ref{table:method_comparison} summarizes existing methods for generating natural language proofs, including single-shot and stepwise methods. Single-shot methods generate the entire proof tree in one shot, enforcing structural constraints explicitly via linear integer programming~\cite{saha2020prover,sun2021probabilistic} or implicitly via pretrained text-to-text transformers~\cite{gontier2020measuring,dalvi2021explaining} (Fig.~\ref{fig:task}~\emph{Middle}). In contrast, stepwise methods generate the proof as individual proof steps, forward~\citep{tafjord2021proofwriter,sanyal2022fairr,bostrom2022natural}, backward~\citep{liang2021explainable,qu2022interpretable,dalvi2022towards}, or both~\citep{hong2022metgen}. Our method generates proofs stepwise, in the forward direction.

When generating a proof step, prior work has observed that if the hypothesis is available, the model often uses it to hallucinate the intermediate conclusion instead of drawing valid logical inferences (Table~\ref{table:examples}). Therefore, ProofWriter~\citep{tafjord2021proofwriter}, FaiRR~\citep{sanyal2022fairr}, and SCSearch~\citep{bostrom2022natural} explicitly ban the model from accessing the hypothesis when generating intermediate conclusions\footnote{The hypothesis may be used for premise selection.}, forcing it to draw inference from known premises only. However, without the hypothesis, the model may generate many valid proof steps irrelevant to the hypothesis. Unlike other forward stepwise methods, our model has access to the hypothesis but relies on a verifier to check the validity of proof steps and prevent hallucination.

\citet{dalvi2022towards} is a concurrent work that also uses a verifier to score multiple candidate proof steps generated by the model. However, they use the scores to make a greedy local decision, selecting the best step and discarding others, whereas we search for proofs with the maximum aggregated scores. Besides, they train the verifier on additionally annotated negative examples, whereas we train on pseudo-negative examples generated automatically without additional annotation efforts (Sec.~\ref{subsec:verifier}). Other stepwise methods in Table~\ref{table:method_comparison} do not have verifiers, and they make local decisions.

PRover~\cite{saha2020prover}, ProofWriter, and FaiRR have only evaluated on the simple RuleTaker dataset~\cite{clark2021transformers}. And it is nontrivial to extend them to real-world data. For example, FaiRR assumes sentences fall into two categories: rules and facts, which are tailored for RuleTaker. \citet{dalvi2021explaining} introduce EntailmentBank, a challenging benchmark of proofs authored by human experts, which is used to evaluate EntailmentWriter, their method for single-shot proof generation. SCSearch and \citet{dalvi2022towards} also use EntailmentBank but focus on different task settings that do not quantitatively evaluate the generated proofs.

\smallsec{Reasoning in other NLP tasks}
Multi-hop reasoning can also be found in open-domain QA~\cite{yang2018hotpotqa}, fact verification~\cite{jiang2020hover}, and reading comprehension~\cite{min2019multi,sinha2019clutrr,jiang2019explore}. Compared to proof generation, reasoning chains in these tasks are much simpler, often consisting of only 2--3 supporting facts. Also, they are more coarse-grained, involving large chunks of texts such as passages instead of simple, short sentences.

\citet{bostrom2021flexible} generate conclusions from premises. Their method can potentially be a component in proof generation but does not consider whether the generated conclusions are relevant.
In math word problems, \citet{cobbe2021training} demonstrate the benefits of using a verifier to re-rank the model's predicted solutions. However, these solutions are unconstrained texts, whereas proofs in our task are structured trees/graphs. Further, we use the verifier during proof generation rather than merely to rank the solutions post hoc. Our verifier is also related to natural language inference~\citep{bowman2015large}, especially the multi-premises setting in \citet{lai2017natural}. 
Recently, large language models have shown the ability to solve multi-step reasoning through chain-of-thought prompting~\citep{wei2022chain,kojima2022large} on arithmetic, symbolic and commonsense reasoning tasks.

\smallsec{Symbolic reasoning}
Classical AI has invested significant efforts in reasoning in symbolic domains, e.g., automated theorem proving (ATP)~\cite{kovacs2013first,yang2019learning,polu2020generative}. Researchers have attempted to apply ATP to natural language through semantic parsing~\cite{mineshima2015higher,saparov2021generative}. However, it is challenging (if not impossible) for semantic parsers to cover the full complexity of natural language. Therefore, researchers have developed reasoning approaches bypassing semantic parsing~\cite{angeli2016combining,kalyanpur2020braid,yang2021learning}.

One promising example is neurosymbolic reasoning. It uses neural networks to handle the complexity of natural language but incorporates inductive biases inspired by symbolic reasoning~\cite{weber2019nlprolog,smolensky1990tensor,kathryn2018tensorlog,lee2015reasoning}. 
Our method also falls into this broad category. It uses large language models to generate individual reasoning steps but chains the steps together into a coherent, tree-structured proof using symbolic search algorithms.
\section{Generating Natural Language Proofs}
\label{sec:task}

\smallsec{Task definition}
Now we define the proof generation task. As in Fig.~\ref{fig:task}~(\emph{Top}), the input consists of a hypothesis $h$ and a set of supporting facts $C = \{\mathrm{sent}_1, \mathrm{sent}_2, \dots, \mathrm{sent}_n\}$. Both $h$ and $\mathrm{sent}_i$ are natural language sentences. $h$ can be derived from a subset of $C$ through reasoning of one or multiple steps.

The output is a proof tree $T$ specifying how $h$ is derived from $C$. The tree has $h$ as its root and $\mathrm{sent}_i$ as leaf nodes. The intermediate nodes are intermediate conclusions \ti{generated} by the model. Each non-leaf node $u$ corresponds to a reasoning step with $u$ as the conclusion and its children as premises. To successfully perform the task, the model must select relevant sentences from $C$, use them as leaf nodes to compose a valid proof tree leading to $h$, and fill in all the intermediate conclusions.

\smallsec{Singles-shot vs. stepwise generation}
A simple and effective method for proof generation, popularized by ProofWriter~\cite{tafjord2021proofwriter}, is to finetune a pretrained T5 model~\cite{raffel2020exploring} to map the input ($h$ and $C$) to the output ($T$), either in a single shot or stepwise. To that end, the input/output must be encoded as text sequences, e.g., encoding the input by concatenating $h$ and $C$ as illustrated in Fig.~\ref{fig:task}.\footnote{Our input encoding scheme has minor differences from EntailmentWriter~\citep{dalvi2021explaining} (details in Appendix~\ref{sec:entailmentbank_format}).
}

The output proof tree can be encoded by post-order traversal. As in Fig.~\ref{fig:task}, nodes are labeled with identifiers: \ttt{sent*} for leaf nodes, \ttt{int*} for intermediate nodes, and \ttt{hypothesis} for the root. The output sequence is produced by traversing the tree in post-order, generating one proof step at each non-leaf node, using \texttt{\&} for ``and'' and \texttt{->} for ``entails''. The tree may correspond to multiple valid sequences due to different ordering between proof steps and between premises within a step. Nevertheless, the evaluation metric can be calculated from the reconstructed trees instead of the raw text sequences.

In single-shot generation, the model generates the output sequence of the entire proof (Fig.~\ref{fig:task} \emph{Middle}), whereas in stepwise generation, each time the model takes the current partial proof as input (besides $h$ and $C$) and generates only the next step (Fig.~\ref{fig:task} \emph{Bottom}).

\section{Our Method: {\name}}

Now we present \namenospace, our method for generating natural language proofs. It has three main components: (1) a stepwise prover for generating candidate proof steps; (2) a verifier for scoring the validity of proofs; (3) an algorithm for searching for proofs that have high aggregated proof scores.

\subsection{Stepwise Prover}
\label{subsec:stepwise}

Like prior work~\cite{tafjord2021proofwriter}, we implement the stepwise prover by finetuning a pretrained T5 model. The training data is extracted from the steps in ground truth proofs. Let $T$ be a proof tree and $u \in T$ be a non-leaf node corresponding to a step we want to extract. Take node \ttt{int1} in Fig.~\ref{fig:task} as an example of $u$. Non-leaf nodes in $T$ can be categorized into (1) $u$'s descendants, e.g., none in Fig.~\ref{fig:task}; (2) $u$ itself and its ancestors, e.g., \ttt{int1} and \ttt{h} in Fig.~\ref{fig:task}; (3) neither, e.g., \ttt{int2} in Fig.~\ref{fig:task}. The partial proof must include all of (1) but none of (2). It may or may not include nodes in (3). Therefore, for this particular example, the partial proof cannot include \ttt{int1} or \ttt{h} but has a free choice about whether to include \ttt{int2}. When preprocessing the training data, we make these choices randomly as a form of data augmentation.

During inference, the prover may generate syntactically ill-formed proof steps. For the example in Fig.~\ref{fig:task}~(\emph{Bottom}), ``\texttt{int1 \& int2 -> hypothesis};'' is ill-formed, since the premise \texttt{int2} is not available in the current partial proof. We mitigate the issue by generating multiple proof steps from the model via beam search and using heuristics to filter out ill-formed ones, e.g., those with syntactical errors or unavailable premises.

\begin{figure*}[ht]
  \centering
  \includegraphics[width=1.0\linewidth]{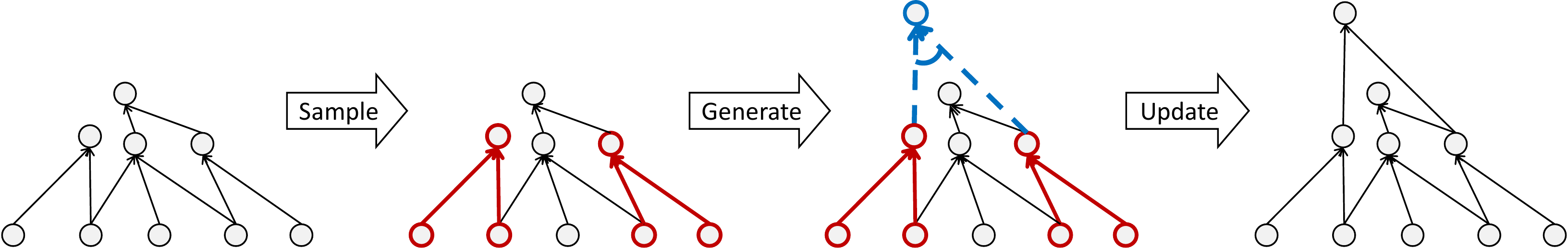}
  \caption{An iteration in the proof search. Nodes for proof step ($S$) are omitted for simplicity of illustration. (1) Sample a partial proof (red) from the proof graph. (2) Use the stepwise prover to generate potential steps (blue, only showing one but generating multiple) and score them using the verifier. (3) Execute the steps to update the graph.}
  \label{fig:graph}
\end{figure*}

\subsection{Verifier}
\label{subsec:verifier}

\smallsec{Scoring a proof step}
We introduce an independent verifier, which is trained to check the validity of proof steps and prevent the prover from hallucinating invalid steps based on the hypothesis. A proof step has multiple premises and one conclusion. The verifier takes them as input and produces a continuous validity score in $[0, 1]$.

We implement the verifier by finetuning a pretrained RoBERTa model~\cite{liu2019roberta} to classify proof steps as valid or invalid. The input premises are shuffled randomly and concatenated with the conclusion. For training data, positive examples (valid steps) can be extracted from ground-truth proofs; however, there are no negative examples readily available. Instead of annotating additional negative examples as in \citet{dalvi2022towards}, we generate pseudo-negative examples automatically. Please refer to Appendix~\ref{sec:negatives} for details. 

\smallsec{Aggregating scores for the entire proof}
Step scores are aggregated to produce the score of the entire proof tree. We associate scores with all nodes in the tree recursively. All leaves have a score of 1.0, as they are explicitly provided assumptions that always hold. Each non-leaf node $u$ corresponds to a proof step $s$ from its children $v_1, v_2, \dots, v_l$ and has a score defined as
\begin{equation}
\label{eqn:proof_score}
\texttt{scr}_n(u) = \mathrm{min}\big(\texttt{scr}_s(s), \texttt{scr}_n(v_1), \dots, \texttt{scr}_n(v_l)\big),
\end{equation}
where $\texttt{scr}_s(s)$ is the step score, e.g., produced by a verifier. Intuitively, $\texttt{scr}_n(u)$ reflects our confidence in $u$, and it is monotonically non-increasing w.r.t. the step score and the scores of its children. Eqn.~\ref{eqn:proof_score} is just one simple way of defining $\texttt{scr}_n(u)$, and we leave a more thorough exploration of scoring options for future work. Finally, the proof score is $\texttt{scr}_n(h)$: the root's score.

\subsection{Proof Search}
\label{subsec:proof_search}

Now we combine the prover and the verifier in our proof search algorithm, which looks for proofs with optimal proof scores. Our method is inspired by automated reasoning in formal logic~\cite{russell2002artificial}, where proofs are found by searching in a large space efficiently. Instead of greedy stepwise proof generation, we search for proof trees in a large proof graph (Fig.~\ref{fig:graph}), allowing the model to explore different paths, recover from errors, and ultimately find better proofs.

\begin{definition}[Proof graph]
\label{dfn:proof_graph}
A proof graph is a directed acyclic graph with the following properties:
\begin{itemize}
    \item \emph{Nodes}: It has four types of nodes $(C, I, S, h)$, where $C$ corresponds to supporting facts, $I$ corresponds to intermediate conclusions, $S$ consists of proof steps, and $h$ is the hypothesis. Nodes in $\{h\} \bigcup I \bigcup C$ are associated with unique sentences.
    \item \emph{Edges}: For any proof step node $s \in S$, it has one or more inbound edges, all of which originate from $I \bigcup C$. It has exactly one outbound edge, which points to a node in $I \bigcup \{h\}$. Besides these edges, the graph contains no additional edges. Any node $u \in I \bigcup \{h\}$ has at most one inbound edge.
    \item \emph{Scores}: All nodes are associated with scores in $[0, 1]$. For any sentence $\mathrm{sent}_i \in C$, $\texttt{scr}_n(\mathrm{sent}_i) = 1$. For any node $u \in I \bigcup \{h\}$, $\texttt{scr}_n(u) = 0$ if it has no inbound edge. Otherwise, it must have exactly one inbound edge from $s \in S$, and $s$ has inbound edges from $\{v_1, \dots, v_l\} \subseteq I \bigcup C$. $\texttt{scr}_n(u)$ is defined by Eqn.~\ref{eqn:proof_score}. Scores of proof step nodes in $S$ are provided externally by the algorithm that operates on the proof graph.
\end{itemize}
\end{definition}
Proof trees correspond to paths in proof graphs (treating $S$ as ``and'' nodes in and-or graphs). Therefore, our task is to search for a path from $C$ to $h$ that maximizes $\texttt{scr}_n(h)$. The search algorithm is outlined in Fig.~\ref{fig:graph} and Algorithm~\ref{alg:search}. Proof search takes place only in inference. In training, we train a stepwise prover $\mathcal{P}$ and a verifier $\mathcal{V}$. In inference, we use them to iteratively expand the proof graph and update the node scores until the graph can no longer be updated. At that point, we extract the best proof of $h$ found so far.

\setlength{\intextsep}{2pt}
\begin{algorithm}[ht]
\DontPrintSemicolon
\SetKwInOut{Input}{Input}
\SetKwInOut{Output}{Output}
\SetKwFunction{GenerateGreedyProof}{generate\_greedy}
\SetKwFunction{SampleNewPartialProof}{sample\_new}
\SetKwFunction{InitializeProofGraph}{initialize\_graph}
\SetKwFunction{ScoreProofSteps}{verify}
\SetKwFunction{GenerateProofSteps}{generate}
\SetKwFunction{UpdateGraph}{update}
\SetKwFunction{ExtractProof}{extract\_proof}
\Input{Hypothesis $h$, supporting facts $C$, stepwise prover $\mathcal{P}$, verifier $\mathcal{V}$}
\Output{Proof tree $T$}
$\mathcal{G} \leftarrow \GenerateGreedyProof(\mathcal{P}, h, C)$ \;
$\mathcal{PG} \leftarrow \InitializeProofGraph(\mathcal{G})$ \;
$\text{explored} \leftarrow \varnothing$ \;
\While{true}{
    $\text{partial\_proof} \leftarrow \SampleNewPartialProof(\mathcal{PG}, \text{explored})$ \;
    $\text{explored} \leftarrow \text{explored} \cup \{\text{partial\_proof}\}$ \;
    $\text{steps}, \text{p\_scrs} \leftarrow \GenerateProofSteps(\mathcal{P}, \text{partial\_proof})$ \;
    $\text{v\_scrs} \leftarrow \ScoreProofSteps(\mathcal{V}, \text{steps})$ \;
    $\text{scrs} \leftarrow (\text{p\_scrs} + \text{v\_scrs}) / 2$ \;
    $\mathcal{PG}' \leftarrow \UpdateGraph(\mathcal{PG}, \text{steps}, \text{scrs})$ \;
    \If{$\mathcal{PG}' = \mathcal{PG} $}{
        break \;
    }
    $\mathcal{PG} \leftarrow \mathcal{PG}'$ \;
}
\KwRet{$\ExtractProof(\mathcal{PG})$}
 \caption{Proof search.}
 \label{alg:search}
\end{algorithm}

\smallsec{Initialization (line 1--3 in Algorithm~\ref{alg:search})}
We initialize the proof graph using the greedy proof generated by $\mathcal{P}$. We could also start from scratch, i.e., $I = S = \varnothing$ and $\texttt{scr}(h) = 0$, but the initialization accelerates proof search by providing a non-zero initial score for $h$, which can be used to prune unpromising paths during search.

\smallsec{Iteration (line 5--13 in Algorithm~\ref{alg:search})}
We use $\mathcal{P}$ to generate proof steps for updating the graph. $\mathcal{P}$ is trained on partial proof trees rather than graphs. So in each iteration, we first sample a new partial proof tree from the graph as the candidate for expansion (details in Appendix~\ref{sec:sampling}). Then, we use $\mathcal{P}$ to generate multiple proof steps $s_1, s_2, \dots, s_k$ through beam search followed by filtering as discussed in Sec.~\ref{subsec:stepwise}. We calculate step scores $\texttt{scr}_s(s_1), \texttt{scr}_s(s_2), \dots, \texttt{scr}_s(s_k)$ by averaging verifier scores v\_scrs from $\mathcal{V}$ (Sec.~\ref{subsec:verifier}) and prover scores p\_scrs from $\mathcal{P}$, which are the likelihood scores in beam search.

Then we try to update the proof graph by executing these steps. Assume a step $s_i$ has premises $v_1, \dots, v_l$ and a conclusion $u$. First, we use Eqn.~\ref{eqn:proof_score} to calculate a tentative score $\widehat{\texttt{scr}}_n(u)$. If $u$ is an existing node in the graph with $\texttt{scr}_n(u) \geq \widehat{\texttt{scr}}_n(u)$, the step becomes a no-op, and we do not perform any update. Otherwise, there are two cases: (1) If $u$ is not in the graph (Fig.~\ref{fig:graph}), we just create a new node for it with $\texttt{scr}_n(u) = \widehat{\texttt{scr}}_n(u)$; (2) If $u$ is in the graph and $\texttt{scr}_n(u) < \widehat{\texttt{scr}}_n(u)$, we update $u$ by replacing the existing proof step leading to it with the new step $s_i$ with $\texttt{scr}_n(u) = \widehat{\texttt{scr}}_n(u)$. According to Eqn.~\ref{eqn:proof_score}, the score change may affect $u$'s successors, so we propagate the change to all of them.

\smallsec{Proof extraction (line 14 in Algorithm~\ref{alg:search})}
When all proof steps in an iteration are no-op, we stop and extract the best proof of $h$ found so far, which simply consists of all predecessors of $h$. The result is guaranteed to be a tree, as we prove in Appendix~\ref{sec:tree}.

\section{Main Results}
\label{sec:main_results}

\subsection{Experimental Setup}

We evaluate \name on proof generation using two benchmarks:  a real-world benchmark EntailmentBank~\cite{dalvi2021explaining} and a synthetic benchmark RuleTaker~\cite{clark2021transformers}.
Training and inference details are in Appendix~\ref{sec:details}. \citet{bostrom2022natural} also evaluated on EntailmentBank but deviated from the original setting, instead formulating the task as distinguishing verifiable hypotheses from unverifiable ones. In order to have a fair comparison with their work, we also evaluate under their setting in Appendix~\ref{sec:bostrom}.


\begin{table*}[ht]
  \centering
  \vspace{-5mm}
  \makebox[1 \textwidth][c]{
  \resizebox{1 \textwidth}{!}{
  \begin{tabular}{@{}lllllllll@{}}
    \toprule
     Task & Method & \multicolumn{2}{c}{Leaves} & \multicolumn{2}{c}{Steps} & \multicolumn{2}{c}{Intermediates} & Overall \\
     \cmidrule(r){3-4} \cmidrule(r){5-6} \cmidrule(r){7-8} & & F1 & AllCorrect & F1 & AllCorrect & F1 & AllCorrect & AllCorrect  \\
    \midrule
    Task 1 & EntailmentWriter & 98.7 & 86.2 & 50.5 & 37.7 & 67.6 & 36.2 & 33.5 \\
    (no-distractor) & EntailmentWriter (T5-11B) & \underline{99.0} & 89.4 & 51.5 & 38.2 & \underline{71.2} & 38.5 & 35.3 \\
    & MetGen$^\dagger$ & \textbf{100.0} & \textbf{100.0} & \textbf{57.7} & \underline{41.9} & 70.8 & \underline{39.2} & \underline{36.5} \\
    & \name (ours) & 97.8 $\pm$ 0.2 & \underline{90.1 $\pm$ 1.2} & \underline{55.6 $\pm$ 0.6} & \textbf{42.3 $\pm$ 0.4} & \textbf{72.4 $\pm$ 0.5} & \textbf{40.6 $\pm$ 0.7} & \textbf{38.9 $\pm$ 0.7} \\
    \midrule
    Task 2 & EntailmentWriter & 84.3 & 35.6 & 35.5 & 22.9 & 61.8 & 28.5 & 20.9 \\
    (distractor) & EntailmentWriter (T5-11B) & \underline{89.1} & \underline{48.8} & \underline{41.4} & 27.7 & \underline{66.2} & 31.5 & 25.6 \\
    & MetGen$^\dagger$ & 82.7 & 46.1 & 41.3 & \underline{29.6} & 61.4 & \underline{32.4} & \underline{27.7} \\
    & \name (ours) & \textbf{90.3 $\pm$ 0.4} & \textbf{58.8 $\pm$ 1.8} & \textbf{47.2 $\pm$ 1.7} & \textbf{34.4 $\pm$ 1.7} & \textbf{70.2 $\pm$ 0.5} & \textbf{37.8 $\pm$ 1.6} & \textbf{33.3 $\pm$ 1.5} \\
    \midrule
    Task 3 & EntailmentWriter & 35.7 & 2.9 & 6.1 & 2.4 & 33.4 & 7.7 & 2.4 \\
    (full-corpus) & EntailmentWriter (T5-11B) & \underline{39.9} & 3.8 & 7.4 & 2.9 & 35.9 & 7.1 & 2.9 \\
    & MetGen$^\dagger$ & 34.8 & \textbf{8.7} & \underline{9.8} & \textbf{8.6} & \underline{36.6} & \textbf{20.4} & \textbf{8.6} \\
    & \name (ours) & \textbf{43.2 $\pm$ 0.6} & \underline{8.2 $\pm$ 0.7} & \textbf{11.2 $\pm$ 0.6} & \underline{6.9 $\pm$ 0.7} & \textbf{42.9 $\pm$ 1.0} & \underline{17.3 $\pm$ 0.5} & \underline{6.9 $\pm$ 0.7} \\
    \bottomrule
  \end{tabular}
  }
  }

  \caption{Test results of proof generation on EntailmentBank~\citep{dalvi2021explaining}. 
  $^\dagger$: MetGen~\citep{hong2022metgen} is trained on \ti{additional data} collected from Wikipedia, whereas other methods are trained only on EntailmentBank. Here we report the results of the MetGen-prefixed model, as the other MetGen-separated model performs slightly better but is 5 times larger. All methods are based on T5-large~\cite{raffel2020exploring} unless otherwise noted. For our method, we report the average performance and the standard deviation for 5 independent runs. Bold and underlined texts highlight the best method and the runner-up.
  }
  \label{table:entailmentbank_main_test}
\end{table*}


\begin{table*}[ht]
  \centering
  \makebox[\textwidth][c]{
  \resizebox{0.9 \textwidth}{!}{
  \begin{tabular}{p{3.5cm}p{4.5cm}p{6cm}}
    \toprule
     Hypothesis & Premises & Conclusions generated by models \\
    \midrule
    \multirow{2}{=}{The next new moon will occur on June 30.} & \multirow{2}{=}{1. A new moon is a kind of phase of the moon. 2. A moon phase occurs 28 days after the last time it occurs.} & \tf{EntailmentWriter}: The next new moon will occur 28 days after June 2. \\
    \cmidrule(r){3-3} & & \tf{\name (ours)}: The next new moon will occur 28 days after the last new moon. \\
    \midrule
     \multirow{2}{=}{Planting trees prevents soil from washing away.} & \multirow{2}{=}{1. Planting trees increases the amount of trees in an environment. 2. Tree roots decrease / reduce soil erosion.} & \tf{EntailmentWriter}: Plants trees increases the amount of trees in an environment. \\
    \cmidrule(r){3-3} & & \tf{\name (ours)}: Planting trees decreases soil erosion. \\
    \bottomrule
  \end{tabular}
  }
  }
  \caption{Examples of invalid proof steps generated by EntailmentWriter~\cite{dalvi2021explaining} but not our method. In the first example, ``June'' in the conclusion is hallucinated rather than derived from the premises. In the second example, EntailmentWriter simply copies one of the premises without performing any meaningful reasoning. 
  }
  \label{table:examples}
\end{table*}

\begin{table*}[ht]
  \footnotesize
  \centering 
  \vspace{-2mm}
  \begin{tabular}{@{}lllllllllllll@{}}
    \toprule
     Method & \multicolumn{6}{c}{Answer accuracy} & \multicolumn{6}{c}{Proof accuracy} \\
     \cmidrule(r){2-7} \cmidrule(r){8-13} & N/A & 0 & 1 & 2 & 3 & All & N/A & 0 & 1 & 2 & 3 & All \\
    \midrule
    FaiRR\textsuperscript{$\dagger$} & \underline{99.6} & \textbf{100.0} & 99.7 & 98.9 & \underline{96.6} & 99.2 & \underline{99.6} & \textbf{100.0} & 99.5 & 97.2 & \underline{95.3} & 98.8 \\
    ProofWriter\textsuperscript{$\dagger$} & \textbf{99.7} & \textbf{100.0} & \underline{99.9} & \textbf{99.7} & \textbf{99.7} & \textbf{99.8} & \textbf{99.7} & \textbf{100.0} & \underline{99.9} & \textbf{99.4} & \textbf{99.1} & \textbf{99.7} \\
    \name (ours) & 99.5 & \textbf{100.0} & \textbf{100.0} & \underline{99.4} & 96.4 & \underline{99.3} & 99.5 & \textbf{100.0} & \textbf{100.0} & \textbf{99.4} & 95.1 & \underline{99.2} \\
    \bottomrule
  \end{tabular}
  \caption{Test results on RuleTaker (OWA)~\cite{tafjord2021proofwriter}. Models are trained and tested on the D0--D3 subset. Methods with $\dagger$ are reported by \citet{sanyal2022fairr}. All methods are based on T5-large. The `All' columns are accuracies on the entire testing set. `0', `1', `2', and `3' are accuracies broken down by the length of testing proofs. `N/A' includes testing examples without ground truth proofs since they can be neither proved nor disproved.}
  \label{table:ruletaker_main}
\end{table*}

\smallsec{EntailmentBank}
EntailmentBank consists of 1,840 proof trees constructed by expert annotators (1,313 for training, 187 for validation, and 340 for testing). It comes with three variants of the proof generation task (Sec.~\ref{sec:task}) with varying numbers of distractors in supporting facts $C$. \emph{Task 1} does not have any distractor, i.e., $C$ consists of exactly the leaf nodes of the ground truth proof tree. In \emph{Task 2}, $C$ always has 25 sentences, including ground truth supporting facts as well as distractors. In \emph{Task 3}, $C$ is a large corpus of 12K sentences derived from WorldTree V2~\cite{xie2020worldtree}, requiring the model to retrieve relevant supporting facts from the corpus. We evaluate on all three tasks. Our method is directly applicable to Task 1 and Task 2. For Task 3, \citet{dalvi2021explaining} retrieve 25 supporting facts for each hypothesis. We use the same retrieved supporting facts and focus solely on proof generation.\footnote{Appendix~\ref{sec:additional_experiments} includes additional discussion on prior work focusing on improving the retriever~\cite{ribeiro2022entailment}.} And following their practice, we train the model on Task 2 and evaluate its zero-shot performance on Task 3. 

A generated proof tree $\hat{T}$ is compared against the ground truth $T$ using official metrics developed by EntailmentBank. In summary, the leaves, proof steps, and intermediate conclusions in $\hat{T}$ are compared against those in $T$ to produce two metrics: the F1 score, and the AllCorrect score which evaluates exact matches.\footnote{Intermediates are compared using BLEURT~\cite{sellam2020bleurt}: a learning-based sentence similarity measure.} In addition, the Overall-AllCorrect metric measures whether $\hat{T}$ is identical to $T$. As a caveat, these metrics do not account for the existence of multiple valid proof trees. Metrics for evaluating leaves are less impacted by this issue, as multiple valid trees often have the same set of leaves. Please refer to Appendix~\ref{sec:entailmentbank_metrics} and EntailmentBank for additional details. We report results produced by their official evaluation code.\footnote{\url{https://github.com/allenai/entailment_bank}}

\smallsec{RuleTaker}
To demonstrate the broad applicability of \name to different reasoning datasets, we also evaluate on RuleTaker. In RuleTaker, $h$ can be either proved, disproved, or neither. The model has to do two things: (1) predict the answer as one of those three categories and (2) generate a proof when $h$ can be proved or disproved. Unlike EntailmentBank, examples in RuleTaker are made of simple, synthetic English sentences generated by templates. We use the OWA (open-world assumption) version of the dataset introduced by \citet{tafjord2021proofwriter}. Following the setup in \citet{sanyal2022fairr}, we train and test on the D0--D3 subset, which consists of proofs of depth $\leq 3$. It has 129K examples---90K for training, 13K for validation, and 26K for testing.

The predicted answer is evaluated using accuracy, whereas proofs are evaluated using Overall-AllCorrect but ignoring the intermediate conclusions.\footnote{The metric was introduced by PRover~\citep{saha2020prover}.}

\subsection{Proof Generation on EntailmentBank}

Table~\ref{table:entailmentbank_main_test} shows test results on EntailmentBank. We compare with EntailmentWriter~\cite{dalvi2021explaining} and MetGen~\cite{hong2022metgen}: two prior state-of-the-art methods that also finetune a T5 model to generate proofs. EntailmentWriter generates the entire proof in a single shot, whereas MetGen generates the proof stepwise. EntailmentWriter has two versions, one with T5-large (737 million parameters) and the other with T5-11B (11 billion parameters). All other methods, including ours, use only T5-large due to computational constraints.  

\name significantly outperforms EntailmentWriter across the board. Take Task 2 as an example. First, it generates more correct proofs overall, improving the Overall-AllCorrect metric from 20.9\% to 33.3\%. Second, it identifies relevant supporting facts more effectively, improving the Leaves-AllCorrect from 35.6\% to 58.8\%. Third, it generates more accurate proof steps and intermediate conclusions, as demonstrated by the Steps and Intermediates metrics. Moreover, our method with T5-large even outperforms EntailmentWriter with T5-11B by a large margin. 

Compared to MetGen, we perform competitively on Task 1 and Task 3 but much better on Task 2, improving the Overall-AllCorrect metric from 27.7\% to 33.3\%. Note that our model is trained only on EntailmentBank, whereas MetGen requires much more data annotation efforts (Sec. 4.1.2 in \citet{hong2022metgen}). First, the MetGen authors manually design templates of different reasoning types and use them to collect additional training data from Wikipedia. Second, they manually annotate the reasoning types of 400 training proof steps in EntailmentBank. MetGen needs these annotations since the model takes the reasoning type as input.

We also examine whether the proof can be generated in a single shot by very large language models such as  GPT-3~\citep{brown2020language} or Codex~\citep{chen2021evaluating}, through prompting with in-context examples. Results in Appendix~\ref{sec:additional_experiments} show that in-context prompting performs substantially worse than our method.

Table~\ref{table:examples} shows two examples of invalid steps generated by EntailmentWriter but avoided by \name, likely due to its verifier. In the first example, ``June'' in EntailmentWriter's conclusion is hallucinated based on the hypothesis, as the word does not appear in the premises. The second example is a typical undesirable behavior also observed by \citet{bostrom2022natural}. When the model has difficulties in generating a conclusion, it falls back into copying one of the premises. Our method generates reasonable conclusions in these two examples.

\begin{table*}[ht]
  \centering 
  \makebox[1 \textwidth][c]{
  \resizebox{1 \textwidth}{!}{
  \begin{tabular}{@{}lllllllll@{}}
    \toprule
     Method & \multicolumn{2}{c}{Leaves} & \multicolumn{2}{c}{Steps} & \multicolumn{2}{c}{Intermediates} & Overall & Time \\
     \cmidrule(r){2-3} \cmidrule(r){4-5} \cmidrule(r){6-7} & F1 & AllCorrect & F1 & AllCorrect & F1 & AllCorrect & AllCorrect & \\
    \midrule
    \name (full model) & \textbf{90.3 $\pm$ 0.4} & \textbf{58.8 $\pm$ 1.8} & \textbf{47.2 $\pm$ 1.7} & \textbf{34.4 $\pm$ 1.7} & \textbf{70.2 $\pm$ 0.5} & \textbf{37.8 $\pm$ 1.6} & \textbf{33.3 $\pm$ 1.5} & 4.4 \\
    \quad w/o search & 89.7 $\pm$ 0.6 & 56.5 $\pm$ 1.7 & \underline{45.9 $\pm$ 1.3} & 33.7 $\pm$ 1.4 & 67.4 $\pm$ 2.3 & \underline{36.4 $\pm$ 1.5} & 31.8 $\pm$ 1.4 & \underline{2.2} \\
    \quad w/o search w/o stepwise & 86.9 $\pm$ 0.6 & 45.6 $\pm$ 1.5 & 42.6 $\pm$ 1.6 & 29.7 $\pm$ 1.3 & 64.6 $\pm$ 1.4 & 32.2 $\pm$ 2.1 & 27.1 $\pm$ 1.5 & \textbf{1.9} \\ 
    \quad  w/o prover score & \textbf{90.3 $\pm$ 0.7} & \underline{57.8 $\pm$ 1.9} & 43.9 $\pm$ 0.9 & 30.4 $\pm$ 0.5 & \underline{68.9 $\pm$ 0.5} & 35.3 $\pm$ 1.2 & 29.7 $\pm$ 0.8 & 4.6 \\
    \quad  w/o verifier score & 89.7 $\pm$ 0.6 & 55.8 $\pm$ 2.2 & 45.8 $\pm$ 1.4 & \underline{33.8 $\pm$ 1.5} & 68.5 $\pm$ 0.4 & 36.1 $\pm$ 1.4 & \underline{31.9 $\pm$ 1.3} & 3.3 \\
    \bottomrule
  \end{tabular}
  }
  }
  \caption{Ablation results on Task 2 of the test set of EntailmentBank. The last column shows the average inference time (in seconds) per test example when running with a batch size of 1 and a beam width of 10.}
  \label{table:entailmentbank_ablation_test}
\end{table*}
\begin{table*}[ht]
  \footnotesize
  \centering 
  \begin{tabular}{@{}lllllllllllll@{}}
    \toprule
     Method & \multicolumn{6}{c}{Answer accuracy} & \multicolumn{6}{c}{Proof accuracy} \\
     \cmidrule(r){2-7} \cmidrule(r){8-13} & N/A & 0 & 1 & 2 & 3 & All & N/A & 0 & 1 & 2 & 3 & All \\
    \midrule
     \name (full model) & \underline{99.5} & \textbf{100.0} & \textbf{100.0} & \textbf{99.4} & \textbf{96.4} & \textbf{99.3} & \underline{99.5} & \textbf{100.0} & \textbf{100.0} & \textbf{99.4} & \textbf{95.1} & \textbf{99.2} \\
    \quad w/o search & 89.2 & 55.9 & 41.7 & 33.7 & 36.8 & 64.3 & 89.2 & 99.6 & 81.2 & 62.4 & 65.0 & 84.2 \\
    \quad w/o search w/o stepwise & 77.5 & 49.9 & 27.1 & 30.4 & 30.4 & 54.8 & 77.5 & 97.0 & 49.2 & 59.8 & 56.9 & 72.7 \\
    \quad w/o prover score & \textbf{99.6} & \textbf{100.0} & \underline{99.9} & \underline{98.9} & \underline{96.2} & \textbf{99.3} & \textbf{99.6} & \underline{99.8} & \underline{99.8} & \underline{98.8} & \underline{94.3} & \underline{99.0} \\
    \quad w/o verifier score & 86.5 & 55.6 & 47.2 & 47.2 & 52.2 & 67.0 & 86.5 & 98.9 & 93.3 & 95.5 & 92.5 & 91.4 \\
    \bottomrule
  \end{tabular}
  \caption{Ablation results on the D0--D3 test set of RuleTaker (OWA)~\cite{tafjord2021proofwriter}.}
  \label{table:ruletaker_ablation}
\end{table*}

\subsection{Generating Answers and Proofs on RuleTaker}

\label{subsec:ruletaker_adaptation}
Hypotheses in RuleTaker can be provable, disprovable, or neither. To benchmark on RuleTaker, we use a similar scheme to \citet{bostrom2022natural} to adapt any proof generation system capable of producing proof scores. In training, we (1) discard hypotheses that are neither provable nor disprovable and (2) convert disprovable hypotheses into provable ones by negating them. We negate sentences by adding an ``I don't think'' prefix. 

In testing, given a hypothesis $h$, we try to generate proofs and the associated scores for both $h$ and its negation $\neg h$. Then we train a linear classifier on top of the two scores to predict the answer. Depending on the predicted answer, we take the generated proof to be the proof of $h$, $\neg h$, or neither.

Results are in Table~\ref{table:ruletaker_main}. ProofWriter is the iterative model in \citet{tafjord2021proofwriter}. It generates proofs stepwise based on T5. Our method performs competitively with ProofWriter and FaiRR~\citep{sanyal2022fairr}.

\begin{table*}[ht]
   \centering
  \makebox[1 \textwidth][c]{
  \resizebox{1 \textwidth}{!}{
  \begin{tabular}{@{}llllllll@{}}
    \toprule
     Method & \multicolumn{2}{c}{Leaves} & \multicolumn{2}{c}{Steps} & \multicolumn{2}{c}{Intermediates} & Overall \\
     \cmidrule(r){2-3} \cmidrule(r){4-5} \cmidrule(r){6-7} & F1 & AllCorrect & F1 & AllCorrect & F1 & AllCorrect & AllCorrect  \\
    \midrule
    \name (no oracle) & 89.4 $\pm$ 0.8 & 56.0 $\pm$ 0.7 & 50.4 $\pm$ 1.9 & 38.4 $\pm$ 1.3 & 71.9 $\pm$ 1.4 & 41.3 $\pm$ 1.4 & 37.1 $\pm$ 1.5 \\
     \quad oracle verifier & 90.0 $\pm$ 0.5 & 56.9 $\pm$ 1.6 & 51.3 $\pm$ 2.2 & 39.1 $\pm$ 2.1 & 72.9 $\pm$ 1.5 & 42.0 $\pm$ 2.1 & 37.9 $\pm$ 2.2 \\
    \quad oracle prover & \underline{94.6 $\pm$ 0.3} & \underline{76.2 $\pm$ 1.9} & \underline{75.8 $\pm$ 0.9} & \underline{67.0 $\pm$ 1.8} & \underline{85.3 $\pm$ 0.5} & \underline{67.5 $\pm$ 1.6} & \underline{66.3 $\pm$ 1.9} \\
    \quad oracle prover + verifier & \textbf{95.7 $\pm$ 0.7} & \textbf{82.7 $\pm$ 2.7} & \textbf{83.0 $\pm$ 3.1} & \textbf{75.4 $\pm$ 3.9} & \textbf{88.1 $\pm$ 1.8} & \textbf{75.7 $\pm$ 4.1} & \textbf{75.2 $\pm$ 3.8} \\
    \bottomrule
  \end{tabular}
  }
  }
  \caption{Validation results on EntailmentBank (Task 2) of replacing the prover/verifier with oracles.}
  \label{table:entailmentbank_test_oracles}
\end{table*}

\section{Analyses}
\label{sec:analyses}

\subsection{Ablation Studies}

\smallsec{EntailmentBank}
Our full model searches for stepwise proofs, relying on both the verifier and the prover for producing scores. We conduct ablation studies on Task 2 of EntailmentBank to better understand the empirical gains coming from each of these components.

First, we compare the full model with the stepwise prover without search (Sec.~\ref{subsec:stepwise}). Results in Table~\ref{table:entailmentbank_ablation_test} show that the full model significantly improves upon this stepwise baseline across the board, demonstrating the benefits of searching for proofs at inference time.

Note that the stepwise baseline without search also performs significantly better than EntailmentWriter (31.8\% vs. 20.9\% in Overall-AllCorrect). We ask how much of the improvement is due to stepwise proof generation as opposed to implementation details and hyperparameters.\footnote{We do not have access to the exact details of EntailmentWriter since the training code has not been released.}. In Table~\ref{table:entailmentbank_ablation_test}, we replicate EntailmentWriter (\emph{Ours w/o search w/o stepwise}) in a unified implementation with our other methods. It outperforms EntailmentWriter (27.1\% vs. 20.9\% in Overall-AllCorrect) but still falls behind the stepwise baseline, demonstrating the effectiveness of stepwise proof generation.

Instead of averaging the scores from the prover and the verifier, what if we use only one of them? Can we still produce accurate and well-calibrated scores that are useful in proof search? In Table~\ref{table:entailmentbank_ablation_test}, we experiment with two additional versions of \namenospace: one without the prover score and the other without the verifier score. Results show that they fail to improve upon the stepwise baseline without search, demonstrating the necessity of combining the verifier and the prover.

Table~\ref{table:entailmentbank_ablation_test} also includes different methods' average inference time per test example, measured with batch size 1 and beam width 10. \name takes 4.4 seconds to process a test example, which is 2x slower than the stepwise baseline. Longer run time is a natural consequence of proof search, and 2x is a modest slow down.

\smallsec{RuleTaker}
We perform similar ablation experiments also on RuleTaker. Results in Table~\ref{table:ruletaker_ablation} show similar patterns as the ablations on EntailmentBank. However, the main difference is that proof search leads to a much larger improvement on RuleTaker, and the two baselines without search perform much lower than prior methods (ProofWriter in Table~\ref{table:ruletaker_main}). This is due to how we adapt proof generation systems to the task of RuleTaker.

As described in Sec.~\ref{subsec:ruletaker_adaptation}, the answer is produced by a linear classifier over proof scores of the hypothesis $h$ and its negation $\neg h$. To perform well, we need the proof generation system to (1) assign high scores to valid hypotheses and (2) assign low scores to invalid hypotheses. However, proof generation systems without verifiers---such as the two baselines---have never seen invalid proof steps in training. They are good at (1) but not necessarily (2); this is sufficient for EntailmentBank (with only valid hypotheses) but not RuleTaker. In contrast, proof generation systems with verifiers---such as our full model---are good at both (1) and (2). In other words, \name can generate more accurate proof scores for both valid and invalid hypotheses.

In addition, Table~\ref{table:ruletaker_ablation} shows that the verifier score alone is sufficient for RuleTaker; adding the prover score does not make much difference. This is because verifiers trained on RuleTaker are highly accurate, and they do not need to be supplemented by the prover.

\subsection{\name with Oracles}
The stepwise prover and the verifier are two major components in \namenospace. To analyze which one is the bottleneck, we construct ``oracle'' versions of them, both of which have access to ground-truth information for better predictions. Given a partial proof, the oracle prover generates multiple potential proof steps just like a regular prover. But it additionally includes all ground truth steps that are valid for the partial proof, i.e., steps whose premises have been satisfied by the partial proof. The oracle verifier also builds on a regular verifier but always assigns the highest score (1.0) to proof steps in the ground truth. Note that we call them ``oracles'', but neither of them is perfect. For example, the oracle verifier cannot reliably tell whether a proof step is valid if the step deviates even slightly from the ground truth.

Table~\ref{table:entailmentbank_test_oracles} shows the validation results on Task 2 of EntailmentBank. The oracle prover alone improves the performance significantly (e.g., a boost of 29.2 in Overall-AllCorrect), demonstrating that the prover is a major bottleneck. In contrast, the oracle verifier alone does not help much, improving only 0.8 in Overall-AllCorrect. However, 0.8 might underestimate the importance of the verifier, as the oracle verifier is not useful if the prover fails to generate the ground truth proof step in the first place. Actually, adding the oracle verifier to the oracle prover improves Overall-AllCorrect by 8.9, demonstrating that the verifier also bears room for improvement.

\begin{figure}[ht]
  \centering
  \includegraphics[width=1.0\linewidth]{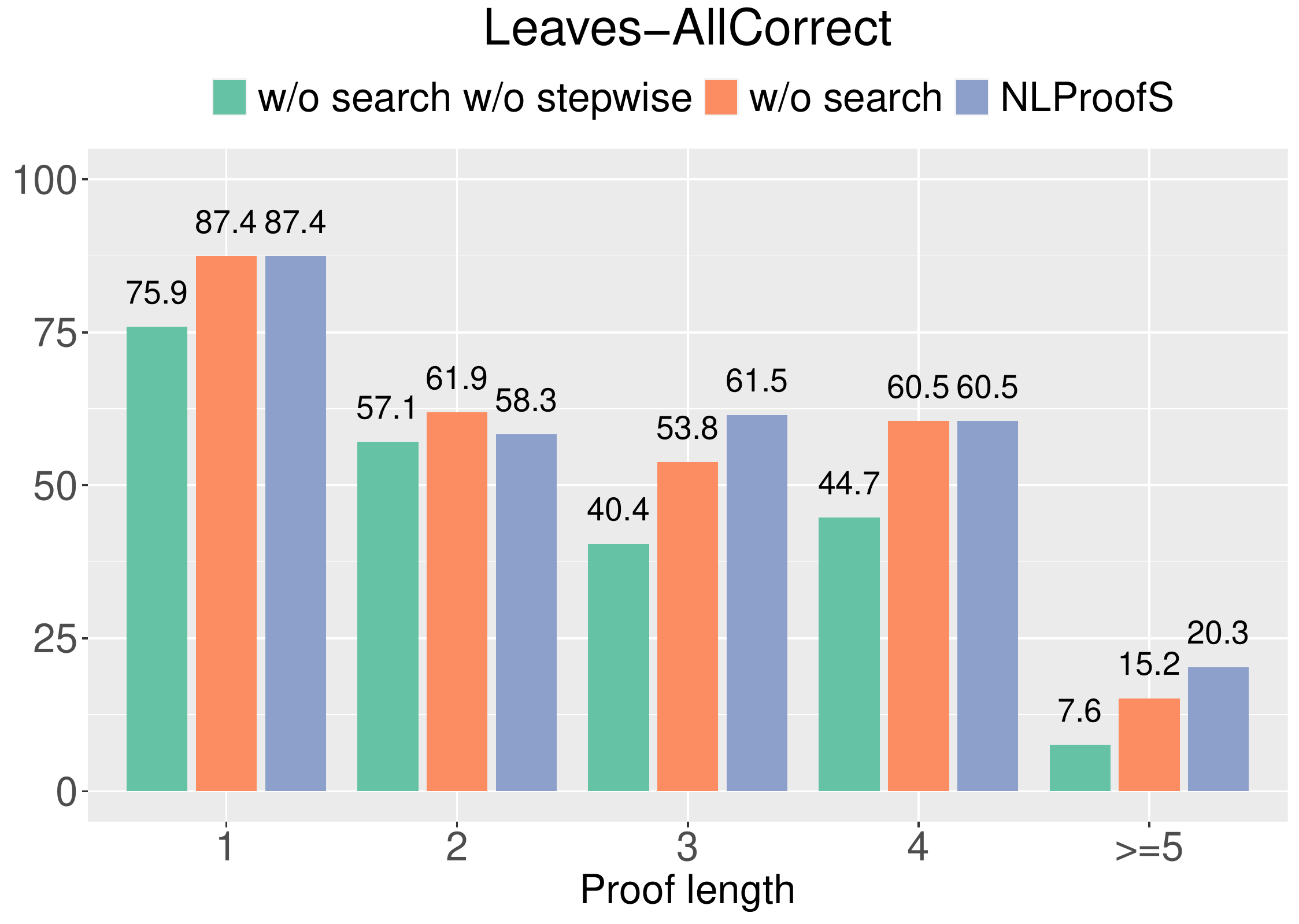}
  \caption{Test results on Task 2 (distractor) broken down by the length of the ground truth proof. Here we show the Leaves-AllCorrect metric.
  }
  \label{fig:proof_length_leaves}
\end{figure}

\subsection{Impact of Proof Length}
Prior work has demonstrated that proof generation models struggle with long proofs~\cite{dalvi2021explaining}. In Fig.~\ref{fig:proof_length_leaves}, we break down the test performance on Task 2 of EntailmentBank by proof length, i.e., the number of steps in the ground truth proof. Here we show only the Leaves-AllCorrect metric. Leaves metrics are relatively robust against the issue of multiple valid proof trees per example. Appendix~\ref{sec:additional_experiments} include figures of other metrics. But they may exaggerate the difficulty with long proofs, as the issue of multiple valid proofs is particularly prominent for long proofs. Nevertheless, we still see a significant performance drop in Fig.~\ref{fig:proof_length_leaves} when the proof length exceeds 1--2, suggesting that generating long proofs remains a challenge. However, we also see the benefits of proof search since its improvements over the stepwise baseline are more evident for long proofs.

In addition, \name tends to generate longer proofs compared to the baselines. On the validation set of Task 2, the ground truth proofs have an average length of 3.2 steps, whereas the average lengths of the generated proofs are 2.6, 2.7, and 2.9 for the single-shot baseline, the stepwise baseline, and \namenospace.

\subsection{Reduced Hallucination}
The verifier in \name aims to prevent the model from hallucinating invalid proof steps. However, it is difficult to evaluate hallucination automatically: when the model generation deviates from ground truth, it is difficult to evaluate whether it is a valid proof step. Therefore, besides qualitative examples in Table~\ref{table:examples}, we also perform a human evaluation similar to \citet{bostrom2022natural}.

We compare three models: EntailmentWriter, \name w/o search (our model without the verifier-guided search), and \name (our full model). For each model, we sample 100 generated proof steps and manually annotate them as valid/invalid. The percentage of valid steps is 43\%, 65\%, and 77\%, demonstrating the effectiveness of \name in mitigating hallucination.

\section{Conclusion}
We have introduced \name for stepwise proof generation in natural language. It learns to generate relevant proof steps conditioning on the hypothesis. To prevent hallucination, \name searches for proofs that maximize a validity score judged by a verifier. Our method has achieved state-of-the-art performance on EntailmentBank and RuleTaker, demonstrating the promise of stepwise proof generation for human-authored proofs. In the future, we hope to see increasing applications of verifiers and proof search in various reasoning tasks.

\clearpage
\section*{Limitations}

Despite the strong performance on two benchmarks, our method still has substantial room for future improvement. Currently, the prover (Sec.~\ref{subsec:stepwise}) uses beam search as the decoding algorithm, which has two problems: First, it generates equivalent proof steps such as ``\texttt{sent1 \& sent2 -> hypothesis}'' and ``\texttt{sent2 \& sent1 -> hypothesis}''. It would be more efficient if we make the decoding invariant to the permutation of premises. Second, the generated proof steps lack diversity. Since the verifier can filter out invalid proof steps, it is more important for the prover to have coverage and diversity than precision. It would be interesting to try more advanced decoding algorithms such as Diverse Beam Search~\citep{vijayakumar2016diverse}.
Like prior work~\citep{tafjord2021proofwriter,dalvi2021explaining}, our prover concatenates all supporting facts into a long text sequence and applies a Transformer encoder to it. This could be an inefficient use of computation and may have problems scaling to longer sentences or a larger number of supporting facts. Solutions like Fusion-in-Decoder~\cite{izacard2021leveraging} may help solve this problem.

\section*{Ethical Considerations}

Machine learning and NLP are moving from lab curiosity into real-world systems that make critical decisions in areas such as hiring, loan approval, and college admission. It is imperative that these decisions are interpretable to humans. Proof generation enhances interpretability by requiring the model to produce not only the final decision but also an explicit proof. However, the interpretability is jeopardized if the model learns to hallucinate invalid proof steps, like a person trying to find unfaithful excuses to justify a decision. Our method uses an independently trained verifier to check the validity of proof steps, which effectively reduces hallucination and enables the generated proof to explain the decision more faithfully.
\section*{Acknowledgements}
This work is partially supported by the Office of Naval Research under Grant N00014-20-1-2634. We gratefully acknowledge financial support from the Schmidt DataX Fund at Princeton University made possible through a major gift from the Schmidt Futures Foundation. We also thank Darby Haller, Jane Pan, Shunyu Yao, and the members of the Princeton NLP group for helpful discussion and valuable feedback.

\bibliographystyle{acl_natbib}
\bibliography{egbib}

\begin{thebibliography}{52}
\expandafter\ifx\csname natexlab\endcsname\relax\def\natexlab#1{#1}\fi

\bibitem[{Angeli et~al.(2016)Angeli, Nayak, and Manning}]{angeli2016combining}
Gabor Angeli, Neha Nayak, and Christopher~D Manning. 2016.
\newblock Combining natural logic and shallow reasoning for question answering.
\newblock In \emph{Annual Meeting of the Association for Computational
  Linguistics (ACL)}.

\bibitem[{Bostrom et~al.(2022)Bostrom, Sprague, Chaudhuri, and
  Durrett}]{bostrom2022natural}
Kaj Bostrom, Zayne Sprague, Swarat Chaudhuri, and Greg Durrett. 2022.
\newblock Natural language deduction through search over statement
  compositions.
\newblock \emph{arXiv preprint arXiv:2201.06028}.

\bibitem[{Bostrom et~al.(2021)Bostrom, Zhao, Chaudhuri, and
  Durrett}]{bostrom2021flexible}
Kaj Bostrom, Xinyu Zhao, Swarat Chaudhuri, and Greg Durrett. 2021.
\newblock Flexible generation of natural language deductions.
\newblock In \emph{Conference on Empirical Methods in Natural Language
  Processing (EMNLP)}.

\bibitem[{Bowman et~al.(2015)Bowman, Angeli, Potts, and
  Manning}]{bowman2015large}
Samuel Bowman, Gabor Angeli, Christopher Potts, and Christopher~D Manning.
  2015.
\newblock A large annotated corpus for learning natural language inference.
\newblock In \emph{Conference on Empirical Methods in Natural Language
  Processing (EMNLP)}.

\bibitem[{Brown et~al.(2020)Brown, Mann, Ryder, Subbiah, Kaplan, Dhariwal,
  Neelakantan, Shyam, Sastry, Askell et~al.}]{brown2020language}
Tom Brown, Benjamin Mann, Nick Ryder, Melanie Subbiah, Jared~D Kaplan, Prafulla
  Dhariwal, Arvind Neelakantan, Pranav Shyam, Girish Sastry, Amanda Askell,
  et~al. 2020.
\newblock Language models are few-shot learners.
\newblock In \emph{Advances in Neural Information Processing Systems
  (NeurIPS)}.

\bibitem[{Chen et~al.(2021)Chen, Tworek, Jun, Yuan, Pinto, Kaplan, Edwards,
  Burda, Joseph, Brockman et~al.}]{chen2021evaluating}
Mark Chen, Jerry Tworek, Heewoo Jun, Qiming Yuan, Henrique Ponde de~Oliveira
  Pinto, Jared Kaplan, Harri Edwards, Yuri Burda, Nicholas Joseph, Greg
  Brockman, et~al. 2021.
\newblock Evaluating large language models trained on code.
\newblock \emph{arXiv preprint arXiv:2107.03374}.

\bibitem[{Clark et~al.(2020)Clark, Tafjord, and
  Richardson}]{clark2021transformers}
Peter Clark, Oyvind Tafjord, and Kyle Richardson. 2020.
\newblock Transformers as soft reasoners over language.
\newblock In \emph{International Joint Conference on Artificial Intelligence
  (IJCAI)}.

\bibitem[{Cobbe et~al.(2021)Cobbe, Kosaraju, Bavarian, Hilton, Nakano, Hesse,
  and Schulman}]{cobbe2021training}
Karl Cobbe, Vineet Kosaraju, Mohammad Bavarian, Jacob Hilton, Reiichiro Nakano,
  Christopher Hesse, and John Schulman. 2021.
\newblock Training verifiers to solve math word problems.
\newblock \emph{arXiv preprint arXiv:2110.14168}.

\bibitem[{Dalvi et~al.(2021)Dalvi, Jansen, Tafjord, Xie, Smith, Pipatanangkura,
  and Clark}]{dalvi2021explaining}
Bhavana Dalvi, Peter Jansen, Oyvind Tafjord, Zhengnan Xie, Hannah Smith,
  Leighanna Pipatanangkura, and Peter Clark. 2021.
\newblock Explaining answers with entailment trees.
\newblock \emph{arXiv preprint arXiv:2104.08661}.

\bibitem[{Dalvi et~al.(2022)Dalvi, Tafjord, and Clark}]{dalvi2022towards}
Bhavana Dalvi, Oyvind Tafjord, and Peter Clark. 2022.
\newblock Towards teachable reasoning systems.
\newblock \emph{arXiv preprint arXiv:2204.13074}.

\bibitem[{Gontier et~al.(2020)Gontier, Sinha, Reddy, and
  Pal}]{gontier2020measuring}
Nicolas Gontier, Koustuv Sinha, Siva Reddy, and Chris Pal. 2020.
\newblock Measuring systematic generalization in neural proof generation with
  transformers.
\newblock In \emph{Advances in Neural Information Processing Systems
  (NeurIPS)}.

\bibitem[{Hong et~al.(2022)Hong, Zhang, Yu, and Zhang}]{hong2022metgen}
Ruixin Hong, Hongming Zhang, Xintong Yu, and Changshui Zhang. 2022.
\newblock {MetGen}: A module-based entailment tree generation framework for
  answer explanation.
\newblock In \emph{Findings of the North American Chapter of the Association
  for Computational Linguistics: NAACL}.

\bibitem[{Izacard and Grave(2021)}]{izacard2021leveraging}
Gautier Izacard and {\'E}douard Grave. 2021.
\newblock Leveraging passage retrieval with generative models for open domain
  question answering.
\newblock In \emph{European Chapter of the Association for Computational
  Linguistics (EACL)}, pages 874--880.

\bibitem[{Jiang et~al.(2020)Jiang, Bordia, Zhong, Dognin, Singh, and
  Bansal}]{jiang2020hover}
Yichen Jiang, Shikha Bordia, Zheng Zhong, Charles Dognin, Maneesh Singh, and
  Mohit Bansal. 2020.
\newblock {HoVer}: A dataset for many-hop fact extraction and claim
  verification.
\newblock In \emph{Findings of the Association for Computational Linguistics:
  EMNLP}.

\bibitem[{Jiang et~al.(2019)Jiang, Joshi, Chen, and Bansal}]{jiang2019explore}
Yichen Jiang, Nitish Joshi, Yen-Chun Chen, and Mohit Bansal. 2019.
\newblock Explore, propose, and assemble: An interpretable model for multi-hop
  reading comprehension.
\newblock In \emph{Annual Meeting of the Association for Computational
  Linguistics (ACL)}.

\bibitem[{Kalyanpur et~al.(2022)Kalyanpur, Breloff, Ferrucci, Lally, and
  Jantos}]{kalyanpur2020braid}
Aditya Kalyanpur, Tom Breloff, David Ferrucci, Adam Lally, and John Jantos.
  2022.
\newblock Braid: Weaving symbolic and neural knowledge into coherent logical
  explanations.
\newblock In \emph{AAAI Conference on Artificial Intelligence}.

\bibitem[{Kathryn and Mazaitis(2018)}]{kathryn2018tensorlog}
William W Cohen Fan~Yang Kathryn and Rivard Mazaitis. 2018.
\newblock {TensorLog}: Deep learning meets probabilistic databases.
\newblock \emph{Journal of Artificial Intelligence Research}, 1:1--15.

\bibitem[{Kojima et~al.(2022)Kojima, Gu, Reid, Matsuo, and
  Iwasawa}]{kojima2022large}
Takeshi Kojima, Shixiang~Shane Gu, Machel Reid, Yutaka Matsuo, and Yusuke
  Iwasawa. 2022.
\newblock Large language models are zero-shot reasoners.
\newblock In \emph{Advances in Neural Information Processing Systems
  (NeurIPS)}.

\bibitem[{Kov{\'a}cs and Voronkov(2013)}]{kovacs2013first}
Laura Kov{\'a}cs and Andrei Voronkov. 2013.
\newblock First-order theorem proving and {Vampire}.
\newblock In \emph{International Conference on Computer Aided Verification
  (CAV)}.

\bibitem[{Lai et~al.(2017)Lai, Bisk, and Hockenmaier}]{lai2017natural}
Alice Lai, Yonatan Bisk, and Julia Hockenmaier. 2017.
\newblock Natural language inference from multiple premises.
\newblock In \emph{International Joint Conference on Natural Language
  Processing (IJCNLP)}.

\bibitem[{Lee et~al.(2016)Lee, He, Yih, Gao, Deng, and
  Smolensky}]{lee2015reasoning}
Moontae Lee, Xiaodong He, Wen-tau Yih, Jianfeng Gao, Li~Deng, and Paul
  Smolensky. 2016.
\newblock Reasoning in vector space: An exploratory study of question
  answering.
\newblock In \emph{International Conference on Learning Representations
  (ICLR)}.

\bibitem[{Liang et~al.(2021)Liang, Bethard, and
  Surdeanu}]{liang2021explainable}
Zhengzhong Liang, Steven Bethard, and Mihai Surdeanu. 2021.
\newblock Explainable multi-hop verbal reasoning through internal monologue.
\newblock In \emph{Annual Conference of the North American Chapter of the
  Association for Computational Linguistics (NAACL)}.

\bibitem[{Liu et~al.(2019)Liu, Ott, Goyal, Du, Joshi, Chen, Levy, Lewis,
  Zettlemoyer, and Stoyanov}]{liu2019roberta}
Yinhan Liu, Myle Ott, Naman Goyal, Jingfei Du, Mandar Joshi, Danqi Chen, Omer
  Levy, Mike Lewis, Luke Zettlemoyer, and Veselin Stoyanov. 2019.
\newblock {RoBERTa}: A robustly optimized {BERT} pretraining approach.
\newblock \emph{arXiv preprint arXiv:1907.11692}.

\bibitem[{Loshchilov and Hutter(2019)}]{loshchilov2018decoupled}
Ilya Loshchilov and Frank Hutter. 2019.
\newblock Decoupled weight decay regularization.
\newblock In \emph{International Conference on Learning Representations
  (ICLR)}.

\bibitem[{Madaan et~al.(2022)Madaan, Zhou, Alon, Yang, and
  Neubig}]{madaan2022language}
Aman Madaan, Shuyan Zhou, Uri Alon, Yiming Yang, and Graham Neubig. 2022.
\newblock Language models of code are few-shot commonsense learners.
\newblock \emph{arXiv preprint arXiv:2210.07128}.

\bibitem[{McCarthy et~al.(1960)}]{mccarthy1960programs}
John McCarthy et~al. 1960.
\newblock \emph{Programs with common sense}.
\newblock RLE and MIT Computation Center.

\bibitem[{Min et~al.(2019)Min, Zhong, Zettlemoyer, and
  Hajishirzi}]{min2019multi}
Sewon Min, Victor Zhong, Luke Zettlemoyer, and Hannaneh Hajishirzi. 2019.
\newblock Multi-hop reading comprehension through question decomposition and
  rescoring.
\newblock In \emph{Annual Meeting of the Association for Computational
  Linguistics (ACL)}.

\bibitem[{Mineshima et~al.(2015)Mineshima, Mart{\'\i}nez-G{\'o}mez, Miyao, and
  Bekki}]{mineshima2015higher}
Koji Mineshima, Pascual Mart{\'\i}nez-G{\'o}mez, Yusuke Miyao, and Daisuke
  Bekki. 2015.
\newblock Higher-order logical inference with compositional semantics.
\newblock In \emph{Conference on Empirical Methods in Natural Language
  Processing (EMNLP)}.

\bibitem[{Polu and Sutskever(2020)}]{polu2020generative}
Stanislas Polu and Ilya Sutskever. 2020.
\newblock Generative language modeling for automated theorem proving.
\newblock \emph{arXiv preprint arXiv:2009.03393}.

\bibitem[{Qu et~al.(2022)Qu, Cao, Gao, Ding, and Xu}]{qu2022interpretable}
Hanhao Qu, Yu~Cao, Jun Gao, Liang Ding, and Ruifeng Xu. 2022.
\newblock Interpretable proof generation via iterative backward reasoning.
\newblock In \emph{Annual Conference of the North American Chapter of the
  Association for Computational Linguistics (NAACL)}.

\bibitem[{Rae et~al.(2021)Rae, Borgeaud, Cai, Millican, Hoffmann, Song,
  Aslanides, Henderson, Ring, Young et~al.}]{rae2021scaling}
Jack~W Rae, Sebastian Borgeaud, Trevor Cai, Katie Millican, Jordan Hoffmann,
  Francis Song, John Aslanides, Sarah Henderson, Roman Ring, Susannah Young,
  et~al. 2021.
\newblock Scaling language models: Methods, analysis \& insights from training
  {Gopher}.
\newblock \emph{arXiv preprint arXiv:2112.11446}.

\bibitem[{Raffel et~al.(2020)Raffel, Shazeer, Roberts, Lee, Narang, Matena,
  Zhou, Li, and Liu}]{raffel2020exploring}
Colin Raffel, Noam Shazeer, Adam Roberts, Katherine Lee, Sharan Narang, Michael
  Matena, Yanqi Zhou, Wei Li, and Peter~J Liu. 2020.
\newblock Exploring the limits of transfer learning with a unified text-to-text
  transformer.
\newblock \emph{Journal of Machine Learning Research (JMLR)}, 21:1--67.

\bibitem[{Ribeiro et~al.(2022)Ribeiro, Wang, Ma, Dong, Wei, Zhu, Chen, Huang,
  Xu, Arnold et~al.}]{ribeiro2022entailment}
Danilo Ribeiro, Shen Wang, Xiaofei Ma, Rui Dong, Xiaokai Wei, Henry Zhu, Xinchi
  Chen, Zhiheng Huang, Peng Xu, Andrew Arnold, et~al. 2022.
\newblock Entailment tree explanations via iterative retrieval-generation
  reasoner.
\newblock In \emph{Findings of the North American Chapter of the Association
  for Computational Linguistics: NAACL}.

\bibitem[{Robertson et~al.(2009)Robertson, Zaragoza
  et~al.}]{robertson2009probabilistic}
Stephen Robertson, Hugo Zaragoza, et~al. 2009.
\newblock The probabilistic relevance framework: {BM25} and beyond.
\newblock \emph{Foundations and Trends{\textregistered} in Information
  Retrieval}, 3(4):333--389.

\bibitem[{Robinson and Voronkov(2001)}]{robinson2001handbook}
Alan~JA Robinson and Andrei Voronkov. 2001.
\newblock \emph{Handbook of automated reasoning}, volume~1.
\newblock Elsevier.

\bibitem[{Ruis et~al.(2020)Ruis, Andreas, Baroni, Bouchacourt, and
  Lake}]{ruis2020benchmark}
Laura Ruis, Jacob Andreas, Marco Baroni, Diane Bouchacourt, and Brenden~M Lake.
  2020.
\newblock A benchmark for systematic generalization in grounded language
  understanding.
\newblock In \emph{Advances in Neural Information Processing Systems
  (NeurIPS)}.

\bibitem[{Russell and Norvig(2002)}]{russell2002artificial}
Stuart Russell and Peter Norvig. 2002.
\newblock \emph{Artificial intelligence: a modern approach}.

\bibitem[{Saha et~al.(2020)Saha, Ghosh, Srivastava, and
  Bansal}]{saha2020prover}
Swarnadeep Saha, Sayan Ghosh, Shashank Srivastava, and Mohit Bansal. 2020.
\newblock {PRover}: Proof generation for interpretable reasoning over rules.
\newblock In \emph{Conference on Empirical Methods in Natural Language
  Processing (EMNLP)}.

\bibitem[{Sanyal et~al.(2022)Sanyal, Singh, and Ren}]{sanyal2022fairr}
Soumya Sanyal, Harman Singh, and Xiang Ren. 2022.
\newblock {FaiRR}: Faithful and robust deductive reasoning over natural
  language.
\newblock In \emph{Annual Meeting of the Association for Computational
  Linguistics (ACL)}.

\bibitem[{Saparov and Mitchell(2022)}]{saparov2021generative}
Abulhair Saparov and Tom~M Mitchell. 2022.
\newblock Towards general natural language understanding with probabilistic
  worldbuilding.
\newblock \emph{Transactions of the Association for Computational Linguistics
  (TACL)}, 10:325--342.

\bibitem[{Sellam et~al.(2020)Sellam, Das, and Parikh}]{sellam2020bleurt}
Thibault Sellam, Dipanjan Das, and Ankur Parikh. 2020.
\newblock {BLEURT}: Learning robust metrics for text generation.
\newblock In \emph{Annual Meeting of the Association for Computational
  Linguistics (ACL)}.

\bibitem[{Sinha et~al.(2019)Sinha, Sodhani, Dong, Pineau, and
  Hamilton}]{sinha2019clutrr}
Koustuv Sinha, Shagun Sodhani, Jin Dong, Joelle Pineau, and William~L Hamilton.
  2019.
\newblock {CLUTRR}: A diagnostic benchmark for inductive reasoning from text.
\newblock In \emph{Conference on Empirical Methods in Natural Language
  Processing (EMNLP)}.

\bibitem[{Smolensky(1990)}]{smolensky1990tensor}
Paul Smolensky. 1990.
\newblock Tensor product variable binding and the representation of symbolic
  structures in connectionist systems.
\newblock \emph{Artificial Intelligence}, 46:159--216.

\bibitem[{Sun et~al.(2021)Sun, Zhang, Chen, Gan, Wu, Chen, Zhou, and
  Li}]{sun2021probabilistic}
Changzhi Sun, Xinbo Zhang, Jiangjie Chen, Chun Gan, Yuanbin Wu, Jiaze Chen, Hao
  Zhou, and Lei Li. 2021.
\newblock Probabilistic graph reasoning for natural proof generation.
\newblock In \emph{Findings of the Association for Computational Linguistics:
  ACL}.

\bibitem[{Tafjord et~al.(2021)Tafjord, Dalvi, and
  Clark}]{tafjord2021proofwriter}
Oyvind Tafjord, Bhavana Dalvi, and Peter Clark. 2021.
\newblock {ProofWriter}: Generating implications, proofs, and abductive
  statements over natural language.
\newblock In \emph{Findings of the Association for Computational Linguistics:
  ACL}.

\bibitem[{Vijayakumar et~al.(2018)Vijayakumar, Cogswell, Selvaraju, Sun, Lee,
  Crandall, and Batra}]{vijayakumar2016diverse}
Ashwin~K Vijayakumar, Michael Cogswell, Ramprasath~R Selvaraju, Qing Sun,
  Stefan Lee, David Crandall, and Dhruv Batra. 2018.
\newblock {Diverse Beam Search}: Decoding diverse solutions from neural
  sequence models.
\newblock In \emph{AAAI Conference on Artificial Intelligence}.

\bibitem[{Weber et~al.(2019)Weber, Minervini, M{\"u}nchmeyer, Leser, and
  Rockt{\"a}schel}]{weber2019nlprolog}
Leon Weber, Pasquale Minervini, Jannes M{\"u}nchmeyer, Ulf Leser, and Tim
  Rockt{\"a}schel. 2019.
\newblock {NLProlog}: Reasoning with weak unification for question answering in
  natural language.
\newblock In \emph{Annual Meeting of the Association for Computational
  Linguistics (ACL)}.

\bibitem[{Wei et~al.(2022)Wei, Wang, Schuurmans, Bosma, Chi, Le, and
  Zhou}]{wei2022chain}
Jason Wei, Xuezhi Wang, Dale Schuurmans, Maarten Bosma, Ed~Chi, Quoc Le, and
  Denny Zhou. 2022.
\newblock Chain of thought prompting elicits reasoning in large language
  models.
\newblock In \emph{Advances in Neural Information Processing Systems
  (NeurIPS)}.

\bibitem[{Xie et~al.(2020)Xie, Thiem, Martin, Wainwright, Marmorstein, and
  Jansen}]{xie2020worldtree}
Zhengnan Xie, Sebastian Thiem, Jaycie Martin, Elizabeth Wainwright, Steven
  Marmorstein, and Peter Jansen. 2020.
\newblock {WorldTree V2}: A corpus of science-domain structured explanations
  and inference patterns supporting multi-hop inference.
\newblock In \emph{International Conference on Language Resources and
  Evaluation (LREC)}.

\bibitem[{Yang and Deng(2019)}]{yang2019learning}
Kaiyu Yang and Jia Deng. 2019.
\newblock Learning to prove theorems via interacting with proof assistants.
\newblock In \emph{International Conference on Machine Learning (ICML)}.

\bibitem[{Yang and Deng(2021)}]{yang2021learning}
Kaiyu Yang and Jia Deng. 2021.
\newblock Learning symbolic rules for reasoning in quasi-natural language.
\newblock \emph{arXiv preprint arXiv:2111.12038}.

\bibitem[{Yang et~al.(2018)Yang, Qi, Zhang, Bengio, Cohen, Salakhutdinov, and
  Manning}]{yang2018hotpotqa}
Zhilin Yang, Peng Qi, Saizheng Zhang, Yoshua Bengio, William Cohen, Ruslan
  Salakhutdinov, and Christopher~D Manning. 2018.
\newblock {HotpotQA}: A dataset for diverse, explainable multi-hop question
  answering.
\newblock In \emph{Conference on Empirical Methods in Natural Language
  Processing (EMNLP)}.

\end{thebibliography}

\appendix
\clearpage

\setcounter{table}{0}
\renewcommand{\thetable}{\Alph{table}}
\setcounter{figure}{0}
\renewcommand{\thefigure}{\Alph{figure}}

\begin{table*}[ht]
  \centering 
  \makebox[1 \textwidth][c]{
  \resizebox{1 \textwidth}{!}{
  \begin{tabular}{@{}llllllll@{}}
    \toprule
     Method & \multicolumn{2}{c}{Leaves} & \multicolumn{2}{c}{Steps} & \multicolumn{2}{c}{Intermediates} & Overall \\
     \cmidrule(r){2-3} \cmidrule(r){4-5} \cmidrule(r){6-7} & F1 & AllCorrect & F1 & AllCorrect & F1 & AllCorrect & AllCorrect  \\
    \midrule
    Our format & \underline{86.9 $\pm$ 0.6} & \underline{45.6 $\pm$ 1.5} & \textbf{42.6 $\pm$ 1.6} & \textbf{29.7 $\pm$ 1.3} & \textbf{64.6 $\pm$ 1.4} & \underline{32.2 $\pm$ 2.1} & \underline{27.1 $\pm$ 1.5} \\
    EntailmentWriter format & \textbf{87.6 $\pm$ 0.5} & \textbf{47.1 $\pm$ 2.1} & \underline{42.2 $\pm$ 1.1} &  \textbf{29.7 $\pm$ 1.6} & \underline{64.5 $\pm$ 0.6} & \textbf{32.3 $\pm$ 1.7}	& \textbf{27.5 $\pm$ 1.8} \\
    \bottomrule
  \end{tabular}
  }
  }
  \caption{Test results of single-shot models on EntailmentBank~\citep{dalvi2021explaining} (Task 2) with different input formats. All methods are based on T5-large~\cite{raffel2020exploring}.}
  \label{table:entailmentbank_format}
\end{table*}

\begin{figure*}[h]
  \centering
  \includegraphics[width=1.0\linewidth]{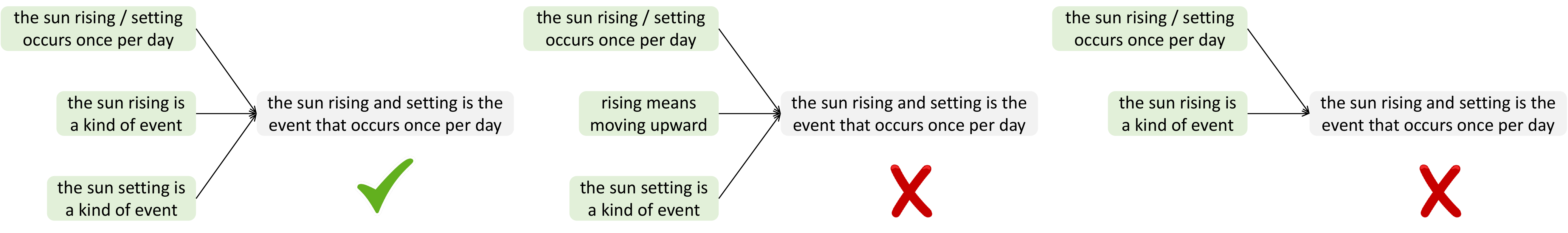}
  \caption{Pseudo-negative examples constructed by perturbing positive examples.}
  \label{fig:pseudo_negative_1}
\end{figure*}

\begin{figure}[h]
  \centering
  \includegraphics[width=0.6\linewidth]{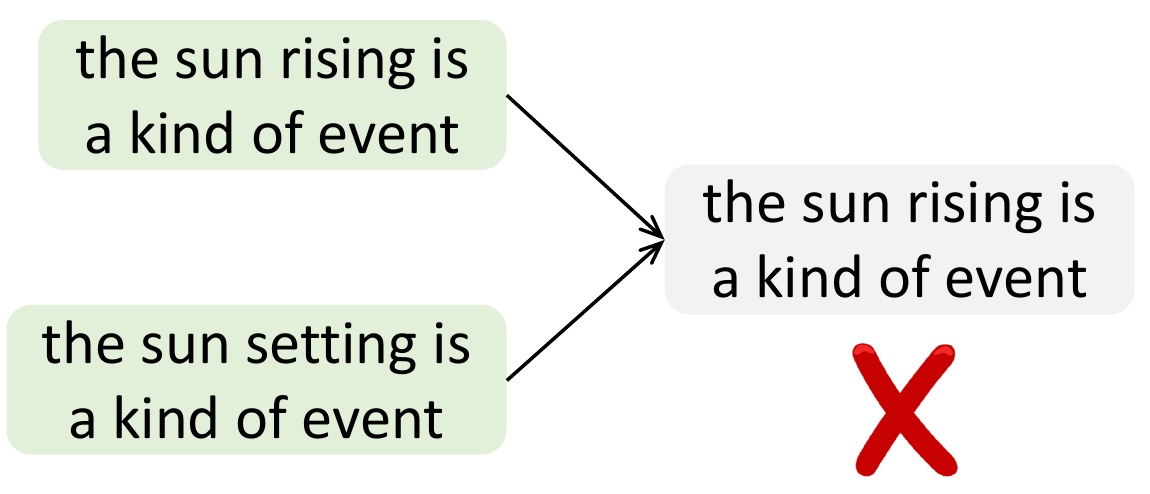}
  \caption{EntailmentBank pseudo-negative examples constructed by copying premises. The step is technically valid but not useful.}
  \label{fig:pseudo_negative_2}
\end{figure}

\begin{figure}[h]
  \centering
  \includegraphics[width=1.0\linewidth]{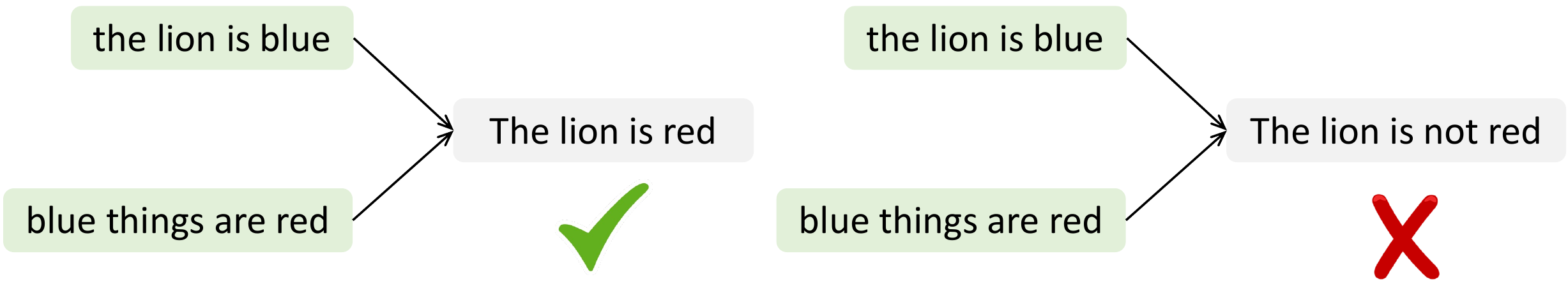}
  \caption{RuleTaker pseudo-negative examples constructed by negating the conclusion.}
  \label{fig:pseudo_negative_3}
\end{figure}

\section{Evaluation Metrics on EntailmentBank}
\label{sec:entailmentbank_metrics}

We evaluate on EntailmentBank using their official evaluation metrics calculated by their evaluation code. Below is a summary; please refer to the EntailmentBank paper for further details.

Let $\hat{T}$ be a generated proof tree, with $T$ being the ground truth. First, nodes in $\hat{T}$ are aligned with nodes in $T$ using a tree alignment algorithm based on the ``sent*'' labels. Once aligned, it is scored using 4 types of metrics---Leaves, Steps, Intermediates, and Overall.
\begin{itemize}[leftmargin=*]
\setlength\itemsep{-0.1em}
\item \emph{Leaves (F1, AllCorrect)}: The Leaves metrics compare the leaf nodes of $\hat{T}$ and $T$ to calculate an F1 score and an ``AllCorrect'' score, which means all predicted nodes are correct. In other words, AllCorrect = 1 if F1 = 1, and AllCorrect = 0 if F1 < 1.
\item \emph{Steps (F1, AllCorrect)}: The Steps metrics measure whether predicted proof steps are structurally correct. A predicted step corresponds to an internal node $u \in \hat{T}$ (aligned to $v \in T$). It is structurally correct if the children of $u$ and $v$ are also perfectly aligned. Since there are multiple steps in $\hat{T}$ and $T$, we can calculate F1 and AllCorrect.
\item \emph{Intermediates (F1, AllCorrect)}: An intermediate conclusion $u \in \hat{T}$ (aligned to $v \in T$) is correct if the BLEURT~\cite{sellam2020bleurt}\footnote{We use the \texttt{
bleurt-large-512} model following \citet{dalvi2021explaining}.} score between $u$ and $v$ is greater than 0.28. We calculate F1 and AllCorrect from all intermediate conclusions in $\hat{T}$ and $T$.
\item \emph{Overall (AllCorrect)}: The Overall metric evaluates whether the leaves, steps, and intermediates are all correct, i.e., AllCorrect = 1 if and only if $\hat{T}$ matches completely with $T$.
\end{itemize}

\section{Different Input Formats}
\label{sec:entailmentbank_format}

We use a slightly different input format from EntailmentWriter~\citep{dalvi2021explaining}, as their format had not been released when we developed our method.

Consider the example in Fig.~\ref{fig:task}. The input to our single-shot baseline (\emph{\name w/o search w/o stepwise} in Table~\ref{table:entailmentbank_ablation_test}) is ``\texttt{\$hypothesis\$ = solar is a kind of renewable energy for heating homes ;
\$context\$ = sent1: homes are buildings sent2: solar is renewable \dots ;}'', whereas the their input is ``\texttt{\$proof\$ ; \$question\$ = As a kind of renewable energy, what can solar be used for? ; \$answer\$ = heating homes ; \$hypothesis\$ = solar is a kind of renewable energy for heating homes ;
\$context\$ = sent1: homes are buildings sent2: solar is renewable \dots ;}'', which includes more information (\texttt{\$question\$} and \texttt{\$answer\$}) than ours.

We experiment with single-shot methods implemented in our codebase using their input format. Results in Table~\ref{table:entailmentbank_format} indicate no significant difference.

\section{Pseudo-negative Examples for Training the Verifier}
\label{sec:negatives}

As mentioned in Sec.~\ref{subsec:verifier}, the negative examples used for training the verifier are constructed automatically using the procedure below:
\begin{itemize}[leftmargin=*]
\setlength\itemsep{-0.1em}
\item As in Fig.~\ref{fig:pseudo_negative_1}, for each positive example consisting of premises and a conclusion, we either remove some premises or replacing one premise with a distractor retrieved from $C$ using BM25~\citep{robertson2009probabilistic}.
\item For EntailmentBank, as in Fig.~\ref{fig:pseudo_negative_2}, we generate additional pseudo-negatives by copying one of the premises as the conclusion.
\item For RuleTaker, as in Fig.~\ref{fig:pseudo_negative_3}, we generate additional pseudo-negatives by negating the conclusion.
\end{itemize}

\section{Proof Graphs are Loopless}
\label{sec:tree}

We prove that the proof graph in Algorithm~\ref{alg:search} is loopless. Intuitively, for a proof graph $(C, I, S, h)$, as we traverse along any path, the node scores in $C \cup I \cup \{h\}$ are non-increasing due to Eqn.~\ref{eqn:proof_score}, which prevents loops.

\begin{lemma}
\label{thm:single_step}
Let $\mathcal{G}$ be a proof graph with nodes $(C, I, S, h)$. For any $s \in S$ and  $v, u \in C \cup I \cup \{h\}$ s.t. edges $(v, s)$ and $(s, u)$ exist, we have $\texttt{scr}_n(u) \leq \texttt{scr}_n(v)$.
\end{lemma}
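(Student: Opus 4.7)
The plan is to unpack Equation~\ref{eqn:proof_score} and observe that $\texttt{scr}_n(v)$ appears as one of the arguments of the minimum that defines $\texttt{scr}_n(u)$. First I would use the edge structure in Definition~\ref{dfn:proof_graph} to pin down where $u$ lives: since edges out of $S$ terminate only in $I \cup \{h\}$, the existence of $(s,u)$ forces $u \in I \cup \{h\}$ (in particular $u \notin C$, so the leaf-score case is not triggered). Moreover, $u$ has at least one inbound edge (namely $s$), and since each node in $I \cup \{h\}$ has at most one inbound edge, $s$ is the unique inbound neighbor of $u$. Hence we are in the non-degenerate branch of the score definition.

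Then I would apply Equation~\ref{eqn:proof_score} directly. Letting $\{v_1,\dots,v_l\}$ denote the inbound neighbors of $s$ in $I \cup C$, we have
\[
\texttt{scr}_n(u) \;=\; \min\bigl(\texttt{scr}_s(s),\, \texttt{scr}_n(v_1),\, \dots,\, \texttt{scr}_n(v_l)\bigr).
\]
By hypothesis the edge $(v,s)$ exists, so $v \in \{v_1,\dots,v_l\}$. Since a minimum is bounded above by any of its arguments, $\texttt{scr}_n(u) \leq \texttt{scr}_n(v)$, which is the claim.

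The statement is essentially a one-line consequence of the definitions, so there is no serious obstacle; the only subtlety is bookkeeping the cases in the score definition to confirm that we are not looking at a leaf in $C$ or at the ``no inbound edge'' default. I would expect this lemma to be used as the induction step for the loopless-ness result in Appendix~\ref{sec:tree}: iterating the inequality along any directed path in $\mathcal{G}$ shows that node scores in $C \cup I \cup \{h\}$ are monotone non-increasing along paths, which rules out a returning edge and hence any cycle.
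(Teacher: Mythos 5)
Your proposal is correct and takes essentially the same approach as the paper: identify $s$ as the proof step with $u$ as conclusion and $v$ among the premises, write out Eqn.~\ref{eqn:proof_score}, and observe that the minimum is bounded by its argument $\texttt{scr}_n(v)$. You are slightly more explicit than the paper in ruling out $u \in C$ and the no-inbound-edge case, which is careful but not a different argument.
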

\begin{proof}
In this case, $s$ must be a proof step with $u$ as the conclusion and $v$ as one of its premises. According to Definition~\ref{dfn:proof_graph} and Eqn.~\ref{eqn:proof_score}, we have
\begin{equation*}
\texttt{scr}_n(u) = \mathrm{min}\big(\texttt{scr}_s(s), \texttt{scr}_n(v), \dots \big).
\end{equation*}
Therefore, $\texttt{scr}_n(u) \leq \texttt{scr}_n(v)$.
\end{proof}

\begin{lemma}
\label{thm:multi_step}
Let $\mathcal{G}$ be a proof graph with nodes $(C, I, S, h)$. For any $v, u \in C \cup I \cup \{h\}$ s.t. $v$ is a predecessor of $u$, we have $\texttt{scr}_n(u) \leq \texttt{scr}_n(v)$.
\end{lemma}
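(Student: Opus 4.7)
The plan is to reduce Lemma~\ref{thm:multi_step} to repeated application of Lemma~\ref{thm:single_step} by exploiting the bipartite structure of the proof graph. Since edges only go between $C \cup I \cup \{h\}$ and $S$ (and never within either set), any directed path from $v$ to $u$ must alternate between these two vertex classes. In particular, any path witnessing that $v$ is a predecessor of $u$ has the form $v = w_0 \to s_1 \to w_1 \to s_2 \to \cdots \to s_k \to w_k = u$, where each $w_i \in C \cup I \cup \{h\}$ and each $s_i \in S$.

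First I would formally observe this alternation: it is immediate from Definition~\ref{dfn:proof_graph}, which constrains inbound edges of step nodes to originate in $I \cup C$ and outbound edges to land in $I \cup \{h\}$, so no two consecutive edges in any walk can stay inside one class. Next I would do induction on the length $k$ of the path. The base case $k = 0$ is trivial since $u = v$. For the inductive step, assume $\texttt{scr}_n(w_{k-1}) \leq \texttt{scr}_n(v)$; by Lemma~\ref{thm:single_step} applied to the edges $(w_{k-1}, s_k)$ and $(s_k, w_k)$, we get $\texttt{scr}_n(w_k) \leq \texttt{scr}_n(w_{k-1})$, and chaining the two inequalities gives $\texttt{scr}_n(u) = \texttt{scr}_n(w_k) \leq \texttt{scr}_n(v)$.

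There is no real obstacle here; the entire argument is already packaged inside Lemma~\ref{thm:single_step}, and the remaining work is just bookkeeping. The only point worth being careful about is the case analysis for $v \in C$ versus $v \in I \cup \{h\}$ when invoking Lemma~\ref{thm:single_step} (since leaves have score $1$ whereas internal nodes may have score $0$ if they lack an inbound edge), but Lemma~\ref{thm:single_step} is stated uniformly over $C \cup I \cup \{h\}$ so no extra care is needed. After establishing Lemma~\ref{thm:multi_step}, the looplessness claim follows immediately: if there were a directed cycle through some $u \in C \cup I \cup \{h\}$, a step node on that cycle would force $\texttt{scr}_n(u) \leq \texttt{scr}_n(u)$ via two distinct neighbors, but more importantly the cycle would require an acyclic DAG to contain a cycle, contradicting the DAG assumption in Definition~\ref{dfn:proof_graph}; this is where the monotonicity of scores along paths becomes the semantic counterpart of the combinatorial acyclicity.
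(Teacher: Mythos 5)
Your proof is correct and matches the paper's approach exactly: the paper also observes the alternating path structure $v \rightarrow s_1 \rightarrow w_1 \rightarrow \cdots \rightarrow s_k \rightarrow u$ forced by the bipartite edge constraints of Definition~\ref{dfn:proof_graph} and then inducts along it using Lemma~\ref{thm:single_step}. (Your closing aside about looplessness is slightly off---$\texttt{scr}_n(u) \leq \texttt{scr}_n(u)$ is a tautology, not a contradiction; the paper's actual argument is that any step which would close a loop has a tentative score no greater than the existing one and is therefore a no-op---but that remark lies outside the lemma you were asked to prove.)
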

\begin{proof}
According to Definition~\ref{dfn:proof_graph}, there exists a path $v \rightarrow s_1 \rightarrow w_1 \rightarrow s_2 \rightarrow w_2 \rightarrow, \dots, \rightarrow s_k \rightarrow u$, where $\forall i, s_i \in S, w_i \in C \cup I \cup \{h\}$. The lemma can be proved by performing induction on the path and applying Lemma~\ref{thm:single_step}.
\end{proof}

\begin{theorem}
In Algorithm~\ref{alg:search}, if the proof graph is loopless after initialization, then it will remain loopless.
\end{theorem}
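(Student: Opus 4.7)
The plan is to proceed by induction on the number of iterations of Algorithm~\ref{alg:search}, taking the loopless initialization as the base case and showing that no single update can introduce a loop. A graph update in the iteration happens only when a proof step $s$ with premises $v_1,\dots,v_l \in C\cup I$ and conclusion $u \in I \cup \{h\}$ is executed with a strictly higher tentative score, i.e., the step is applied only when
\[
\widehat{\texttt{scr}}_n(u) = \min\bigl(\texttt{scr}_s(s), \texttt{scr}_n(v_1),\dots,\texttt{scr}_n(v_l)\bigr) > \texttt{scr}_n(u).
\]
I would handle the two cases (``$u$ is new'' and ``$u$ exists and its incoming step is replaced'') together, together with the subsequent score propagation to successors of $u$.

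The key observation is that creating a loop by adding a step $v_i \to s \to u$ would require at least one premise $v_i$ to be a descendant of $u$ in the graph just before the update. In the replacement case, the old incoming step to $u$ is removed first, but removing an inbound edge of $u$ cannot change the set of descendants of $u$ (those are reached only through $u$'s outbound edges), so the ``before the update'' graph may be taken to already exclude that old step. Now I would apply Lemma~\ref{thm:multi_step} to that loopless predecessor: if $u$ is a predecessor of $v_i$, then $\texttt{scr}_n(v_i) \le \texttt{scr}_n(u)$, hence
\[
\widehat{\texttt{scr}}_n(u) \le \texttt{scr}_n(v_i) \le \texttt{scr}_n(u),
\]
which contradicts the strict-inequality guard that allowed the update in the first place. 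Therefore no $v_i$ can be a descendant of $u$, and the new edges cannot close a cycle.

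Finally, I would note that the propagation step that fixes the scores of $u$'s successors after the update does not add or remove any edges; it only updates node scores. So it cannot turn a loopless graph into a looped one. Combining the two observations gives the inductive step and completes the proof. I do not anticipate a genuine obstacle here; the main subtlety to be careful about is the replacement case, where one must verify that removing the old incoming step to $u$ does not inadvertently enlarge the descendant set of $u$ and that Lemma~\ref{thm:multi_step} is applied to the correct intermediate graph.
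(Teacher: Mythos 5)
Your proof is correct and follows essentially the same strategy as the paper: assume for contradiction that an executed step would close a cycle, apply Lemma~\ref{thm:multi_step} along the pre-existing path to bound the conclusion's tentative score $\widehat{\texttt{scr}}_n(u)$ by its current score $\texttt{scr}_n(u)$, and conclude the step would have been a no-op, contradicting the update guard. Your additional care with the replacement case (removing the old inbound step cannot enlarge $u$'s descendant set) and with the score-propagation step (which changes no edges) makes the argument slightly more airtight than the paper's sketch, but the underlying idea is identical.
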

\begin{proof}
We just need to prove that it is impossible to introduce a loop during any iteration in Algorithm~\ref{alg:search}. We prove it by contradiction, assuming we could introduce a loop in an iteration, as in Fig.~\ref{fig:loop}. We have two nodes $v, u \in C \cup I \cup \{h\}$, and $v$ is a predecessor of $u$ before introducing the loop. Further assume that the loop is introduced as a result of executing a proof step $s$, which created the edges $(u, s)$ and $(s, v)$ (the blue arrow in Fig.~\ref{fig:loop}; $s$ is omitted). In this hypothetical scenario, the loop would be $v \rightarrow \dots \rightarrow u \rightarrow s \rightarrow v$.

Apply Lemma~\ref{thm:multi_step} to the path from $v$ to $u$, and we have $\texttt{scr}_n(u) \leq \texttt{scr}_n(v)$ before introducing the loop. Remember how the step $s$ is executed (Sec.~\ref{subsec:proof_search}), the tentative score $\widehat{\texttt{scr}}_n(v) = \mathrm{min}\big(\texttt{scr}_s(s), \texttt{scr}_n(u), \dots \big) \leq \texttt{scr}_n(u) \leq \texttt{scr}_n(v)$. The tentative score is not greater than the original score of $v$. So the step is a no-op that should not be executed. Therefore, it is impossible to introduce loops.
\end{proof}

\begin{figure}[ht]
  \centering
  \includegraphics[width=1.0\linewidth]{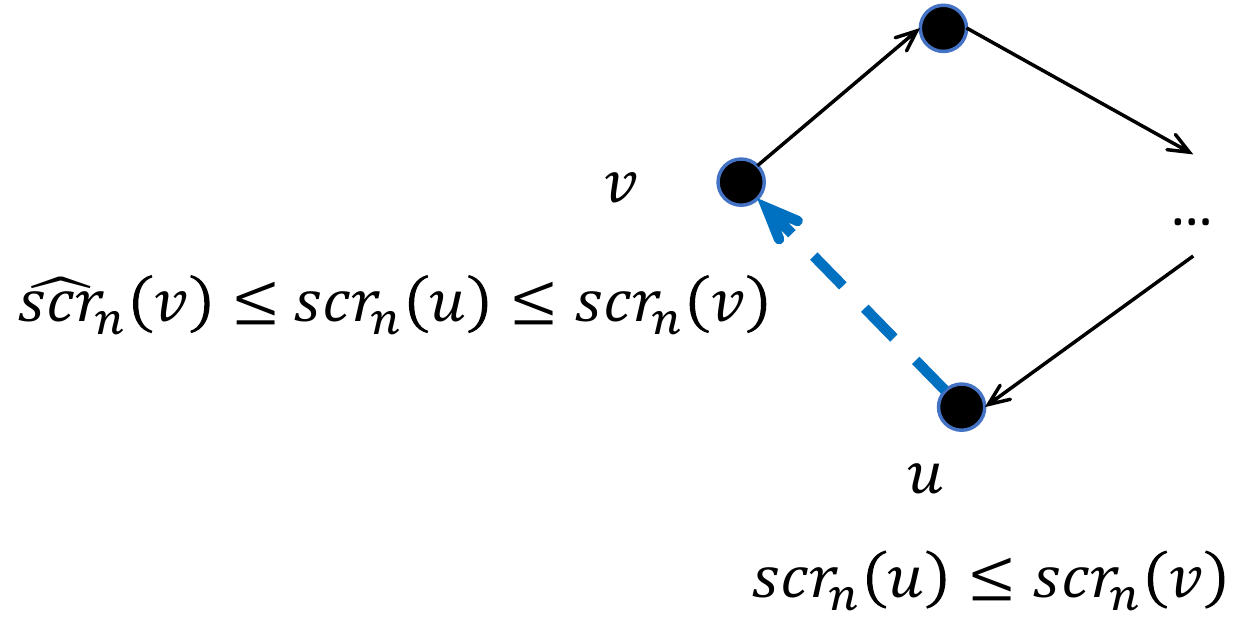}
  \caption{A hypothetical loop in the proof graph. Nodes for proof step ($S$) are omitted for simplicity of illustration. The blue arrow is a hypothetical proof step that introduces the loop, which should actually be a no-op because the tentative score  $\widehat{\texttt{scr}}_n(v)$ not greater than the existing score  $\texttt{scr}_n(v)$ (Sec.~\ref{subsec:proof_search}).}
  \label{fig:loop}
\end{figure}

\section{Procedure for Sampling Partial Proofs}
\label{sec:sampling}

The \texttt{sample\_new} function in Algorithm~\ref{alg:search} samples a partial proof tree from the proof graph. First, the graph is a DAG, so we can visit nodes in the order of a topological sort---successors before predecessors. Second, when visiting a node, if it is not already a part of the partial proof, we add it with a probability of 0.5. Third, whenever we add a node, we also add all of its predecessors. This ensures the result is a valid partial proof.

\section{Training and inference details}
\label{sec:details}

We use T5-large~\cite{raffel2020exploring} for the prover and RoBERTa-large~\cite{liu2019roberta} for the verifier.
All experiments are run on machines with 2 CPUs, 16GB memory, and one NVIDIA A6000 GPU. Models are optimized using AdamW~\cite{loshchilov2018decoupled}. The learning rate warms up linearly from 0 to a maximum value and then decays following the cosine schedule. Hyperparameters are tuned on the validation data separately for each task/method. We report test results of models trained on the training set alone, excluding the validation set. We report the average performance and the standard deviation for 5 independent runs.

Our results on EntailmentBank are produced by the official evaluation code.\footnote{\url{https://github.com/allenai/entailment_bank}} The code had a bug fix in May 2022 that impacted the Intermediate-AllCorrect metric of methods evaluated earlier, including IRGR~\citep{ribeiro2022entailment} and arXiv versions v1, v2 of EntailmentWriter~\citep{dalvi2021explaining}. We evaluate NLProofS using the evaluation code after the bug fix. And we report the EntailmentBank numbers based on their fixed arXiv version v3 that was released on May 28, 2022. The numbers in the IRGR paper have not been updated yet, so we report its Intermediates-AllCorrect metric based on private correspondence with the authors.


\begin{table*}[ht]
  \centering
  \makebox[1 \textwidth][c]{
  \resizebox{1 \textwidth}{!}{
  \begin{tabular}{@{}llllllll@{}}
    \toprule
     Method & \multicolumn{2}{c}{Leaves} & \multicolumn{2}{c}{Steps} & \multicolumn{2}{c}{Intermediates} & Overall \\
     \cmidrule(r){2-3} \cmidrule(r){4-5} \cmidrule(r){6-7} & F1 & AllCorrect & F1 & AllCorrect & F1 & AllCorrect & AllCorrect  \\
    \midrule
    EntailmentWriter & 86.2 & 43.9 & 40.6 & 28.3 & 67.1 & 34.8 & 27.3 \\
    EntailmentWriter (T5-11B) & \textbf{89.4} & 52.9 & 46.6 & 35.3 & 69.1 & 36.9 & 32.1 \\
    \name (ours) & \textbf{89.4 $\pm$ 0.8} & \textbf{56.0 $\pm$ 0.7} & \textbf{50.4 $\pm$ 1.9} & \textbf{38.4 $\pm$ 1.3} & \textbf{71.9 $\pm$ 1.4} & \textbf{41.3 $\pm$ 1.4} & \textbf{37.1 $\pm$ 1.5} \\
    \midrule
    GPT-3~\citep{brown2020language} & 64.2 $\pm$ 2.3 & 15.3 $\pm$ 1.9 & 17.6 $\pm$ 0.6 & 12.3 $\pm$ 1.4 & 53.6 $\pm$ 1.4 & 22.3 $\pm$ 1.1 & 12.3 $\pm$ 1.4 \\
    Codex~\citep{chen2021evaluating} & 68.9 $\pm$ 3.7 & 19.8 $\pm$ 3.2 & 21.4 $\pm$ 3.0  & 14.6 $\pm$ 1.7 & 55.6 $\pm$ 2.2  & 23.2 $\pm$ 1.9 & 14.4 $\pm$ 1.4 \\
    \bottomrule
  \end{tabular}
  }
  }
  \caption{Validation results of proof generation on EntailmentBank~\cite{dalvi2021explaining}. Results of GPT-3 and Codex are based on prompting with 7 in-context examples randomly sampled from the training data.}
  \label{table:entailmentbank_main_val}
\end{table*}

\begin{table*}[ht]
  \small
  \centering 
  \begin{tabular}{@{}lllllllllllll@{}}
    \toprule
     Method & \multicolumn{6}{c}{Answer accuracy} & \multicolumn{6}{c}{Proof accuracy} \\
     \cmidrule(r){2-7} \cmidrule(r){8-13} & N/A & 0 & 1 & 2 & 3 & All & N/A & 0 & 1 & 2 & 3 & All \\
    \midrule
    \name & 99.7 & 100.0 & 100.0 & 99.4 & 97.0 & 99.4 & 99.7 & 100 & 100 & 99.4 & 96.1 & 99.3 \\
    \bottomrule
  \end{tabular}
  \caption{Validation results on the D0--D3 subset  of RuleTaker (OWA)~\cite{tafjord2021proofwriter}.}
  \label{table:ruletaker_d03_val}
\end{table*}

\section{Distinguishing Valid and Invalid Hypotheses}
\label{sec:bostrom}

\begin{table}[ht]
  \small
  \centering 
  \begin{tabular}{@{}lll@{}}
    \toprule
     Method & Task 1 & Task 2 \\
    \midrule
    Learned (Goal) + PPM\textsuperscript{$\dagger$} & \underline{82.0 $\pm$ 1.0} & \underline{86.0 $\pm$ 1.0} \\
    EntailmentWriter\textsuperscript{$\dagger$} & 53.0 $\pm$ 2.0 & 65.0 $\pm$ 2.0 \\
    Ours & \textbf{82.4 $\pm$ 0.8} & \textbf{90.9 $\pm$ 1.2} \\
    \bottomrule
  \end{tabular}
  \caption{Area under the ROC curve (AUROC) of distinguishing valid/invalid hypotheses on EntailmentBank~\cite{dalvi2021explaining} validation set. Methods with $\dagger$ are reported by \citet{bostrom2022natural}. All methods are based on T5-large~\cite{raffel2020exploring}.}
  \label{table:bostrom_val}
\end{table}

\begin{figure}[ht]
  \centering
  \includegraphics[width=1.0\linewidth]{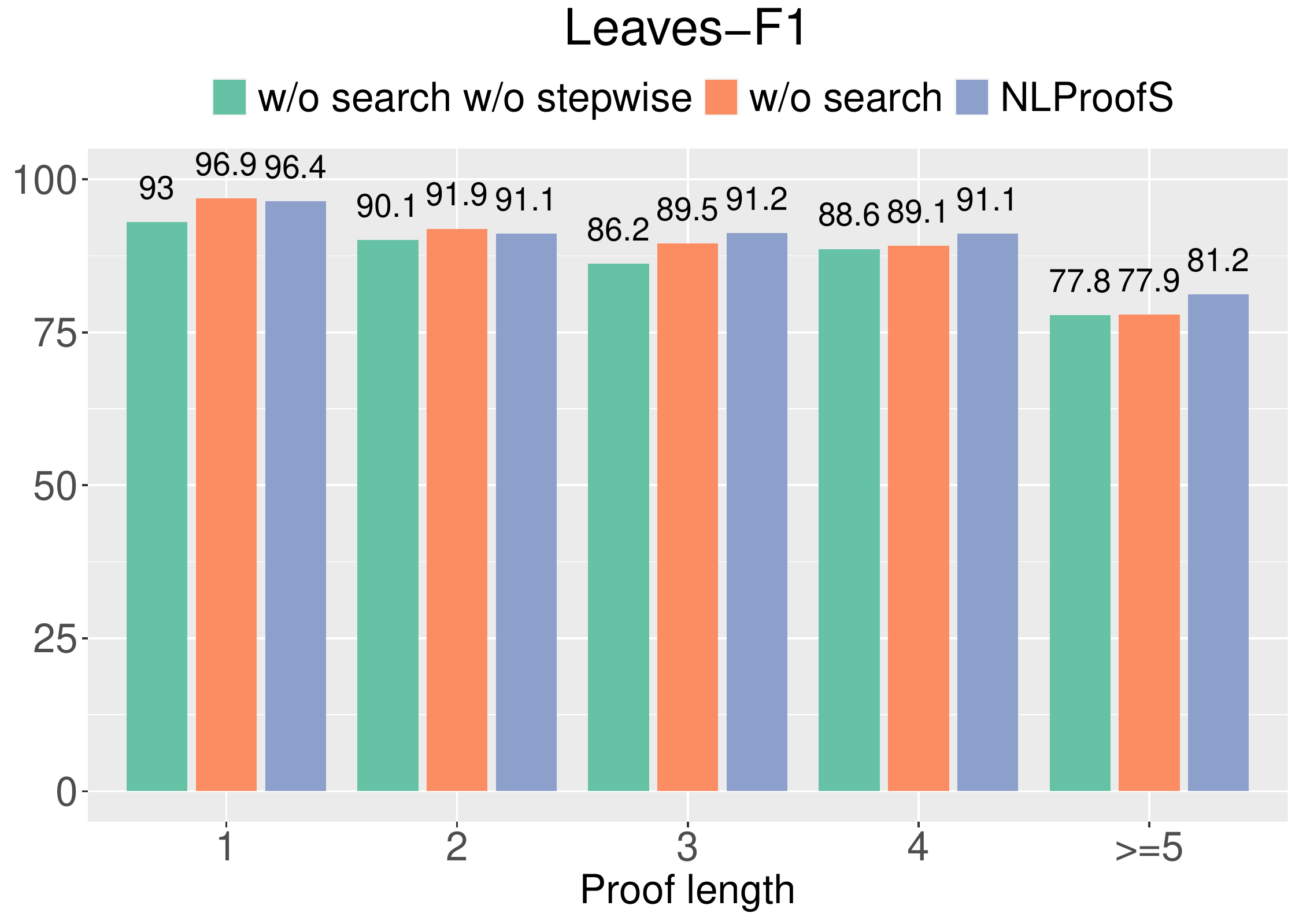}
  \caption{The Leaves-F1 metric of test results on task 2 (distractor) broken down by the length of the ground truth proof.}
  \label{fig:proof_length_leaves_f1}
\end{figure}

\begin{figure}[ht]
  \centering
  \includegraphics[width=1.0\linewidth]{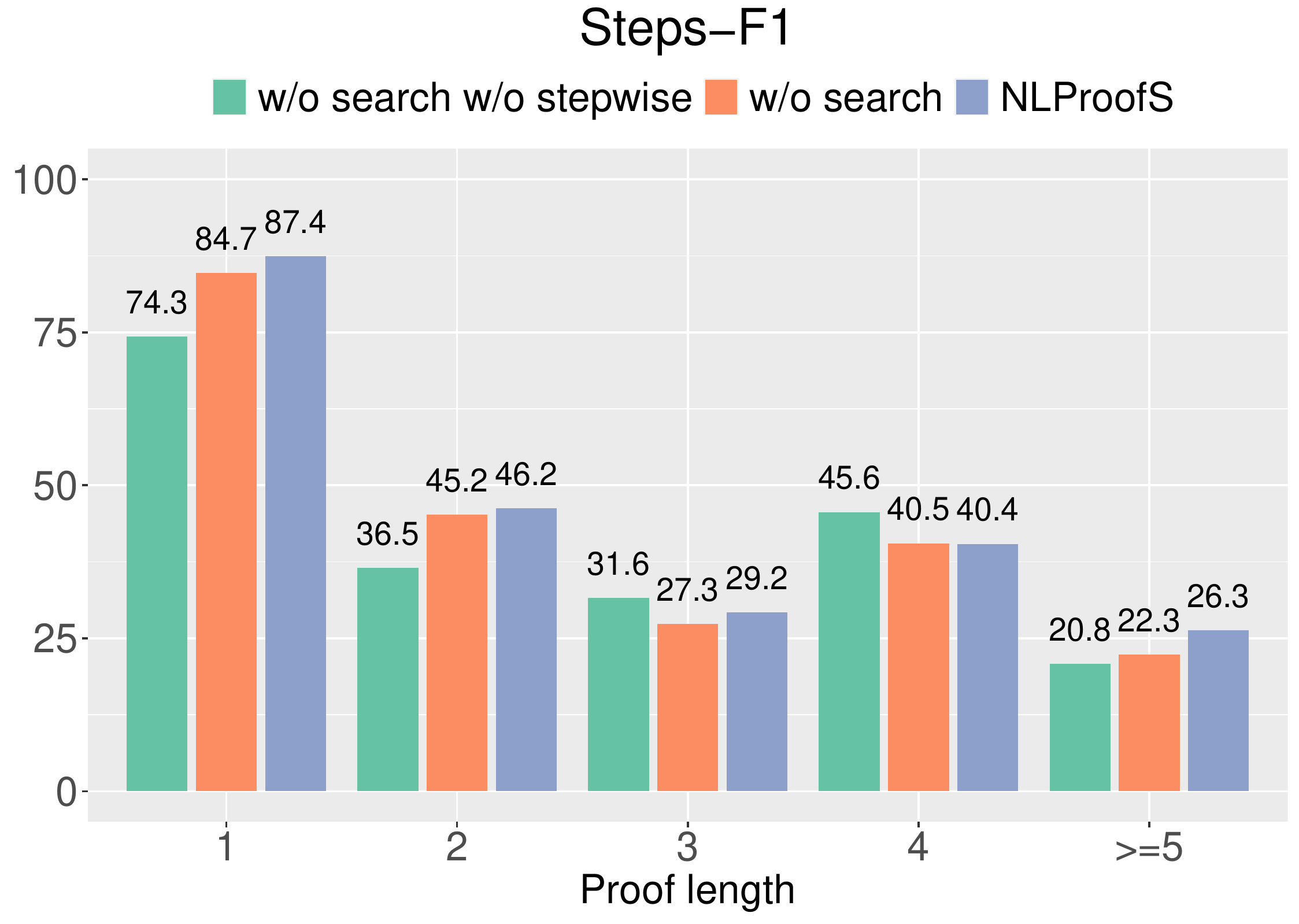}
  \caption{The Steps-F1 metric of test results on task 2 (distractor) broken down by the length of the ground truth proof.
  }
  \label{fig:proof_length_steps_f1}
\end{figure}

\begin{figure}[ht]
  \centering
  \includegraphics[width=1.0\linewidth]{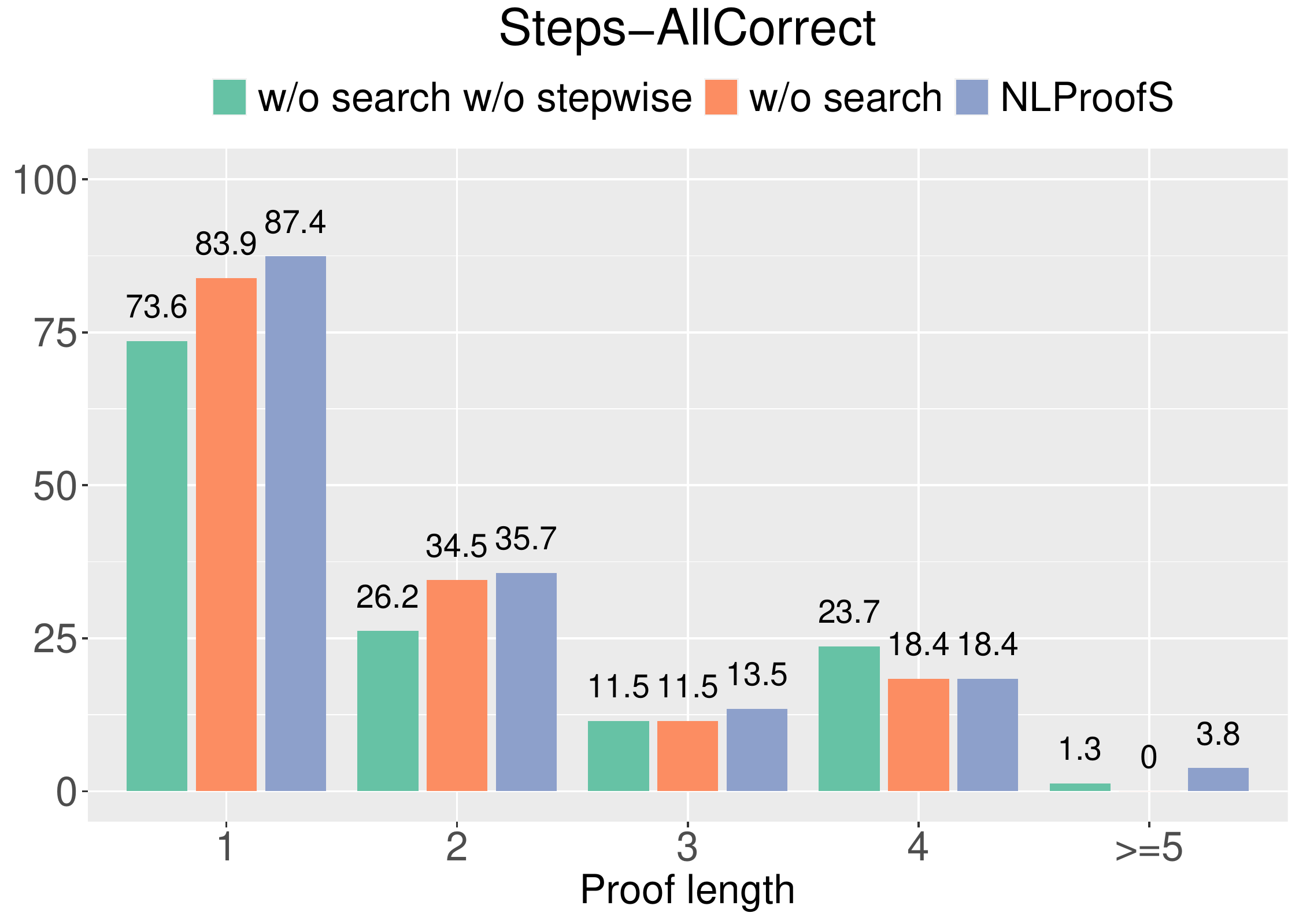}
  \caption{The Steps-AllCorrect metric of test results on task 2 (distractor) broken down by the length of the ground truth proof.
  }
  \label{fig:proof_length_steps_allcorrect}
\end{figure}

\begin{figure}[ht]
  \centering
  \includegraphics[width=1.0\linewidth]{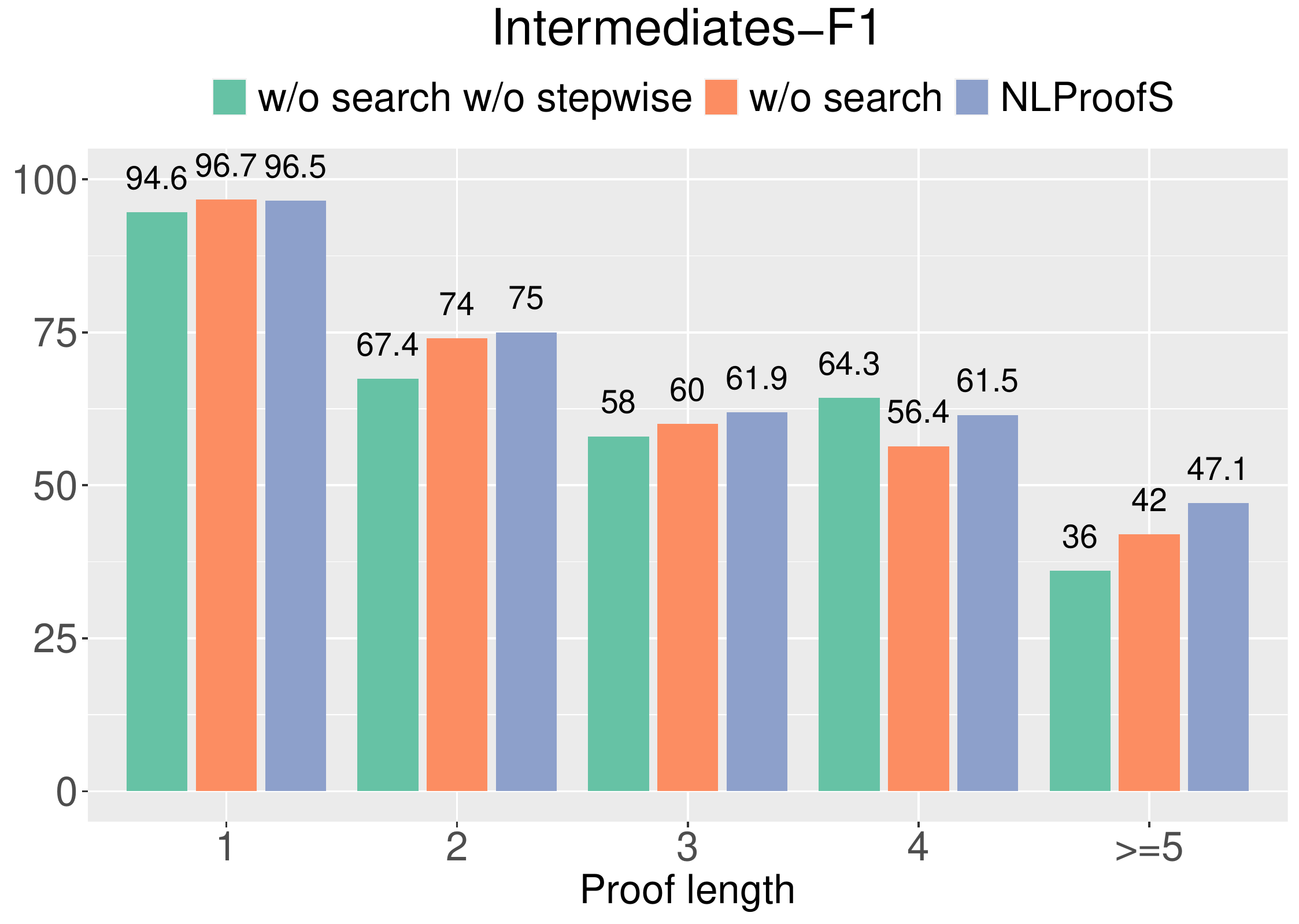}
  \caption{The Intermediates-F1 metric of test results on task 2 (distractor) broken down by the length of the ground truth proof.
  }
  \label{fig:proof_length_intermediates_f1}
\end{figure}

\begin{figure}[ht]
  \centering
  \includegraphics[width=1.0\linewidth]{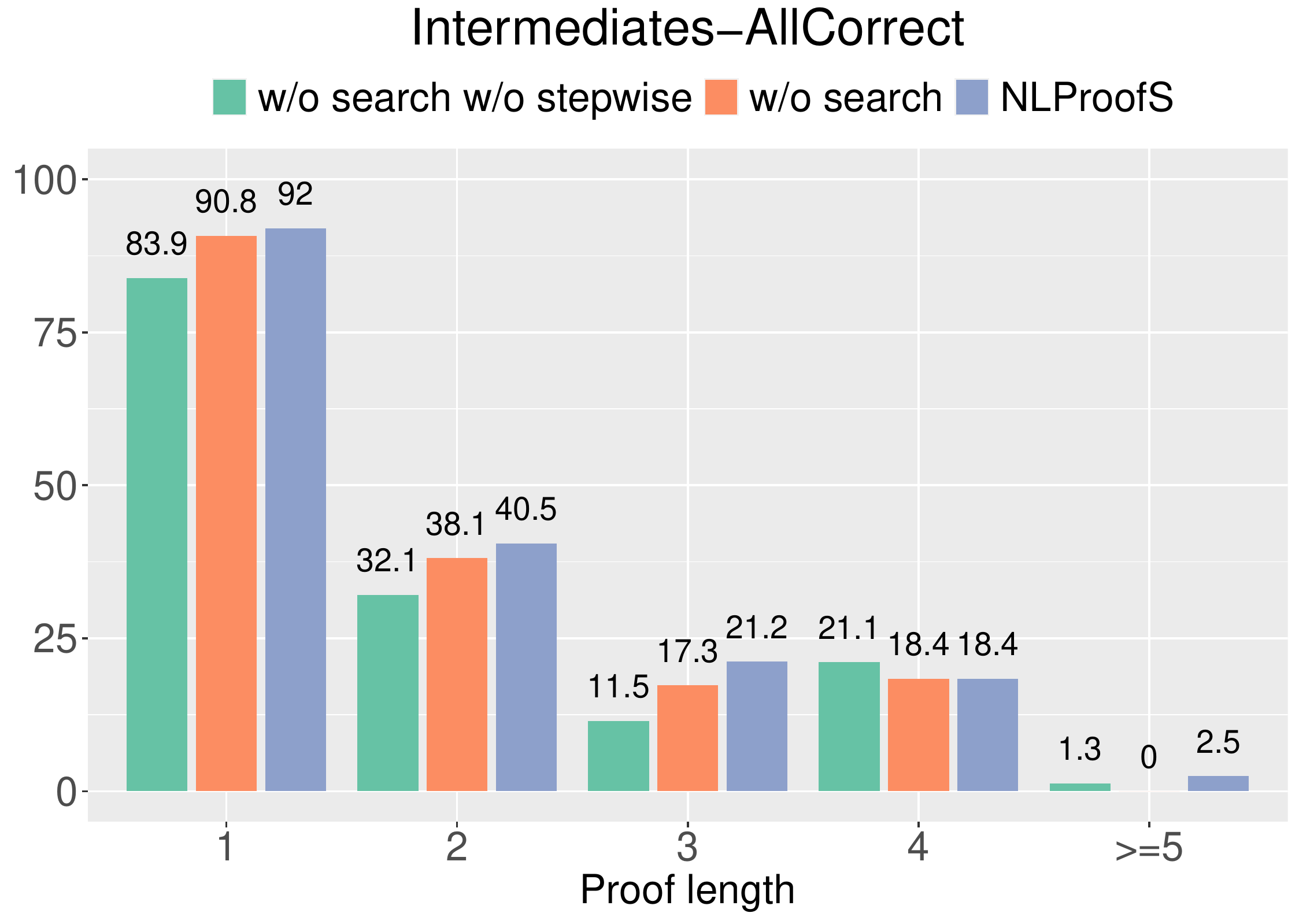}
  \caption{The Intermediates-AllCorrect metric of test results on task 2 (distractor) broken down by the length of the ground truth proof.
  }
  \label{fig:proof_length_intermediates_allcorrect}
\end{figure}

\begin{figure}[ht]
  \centering
  \includegraphics[width=1.0\linewidth]{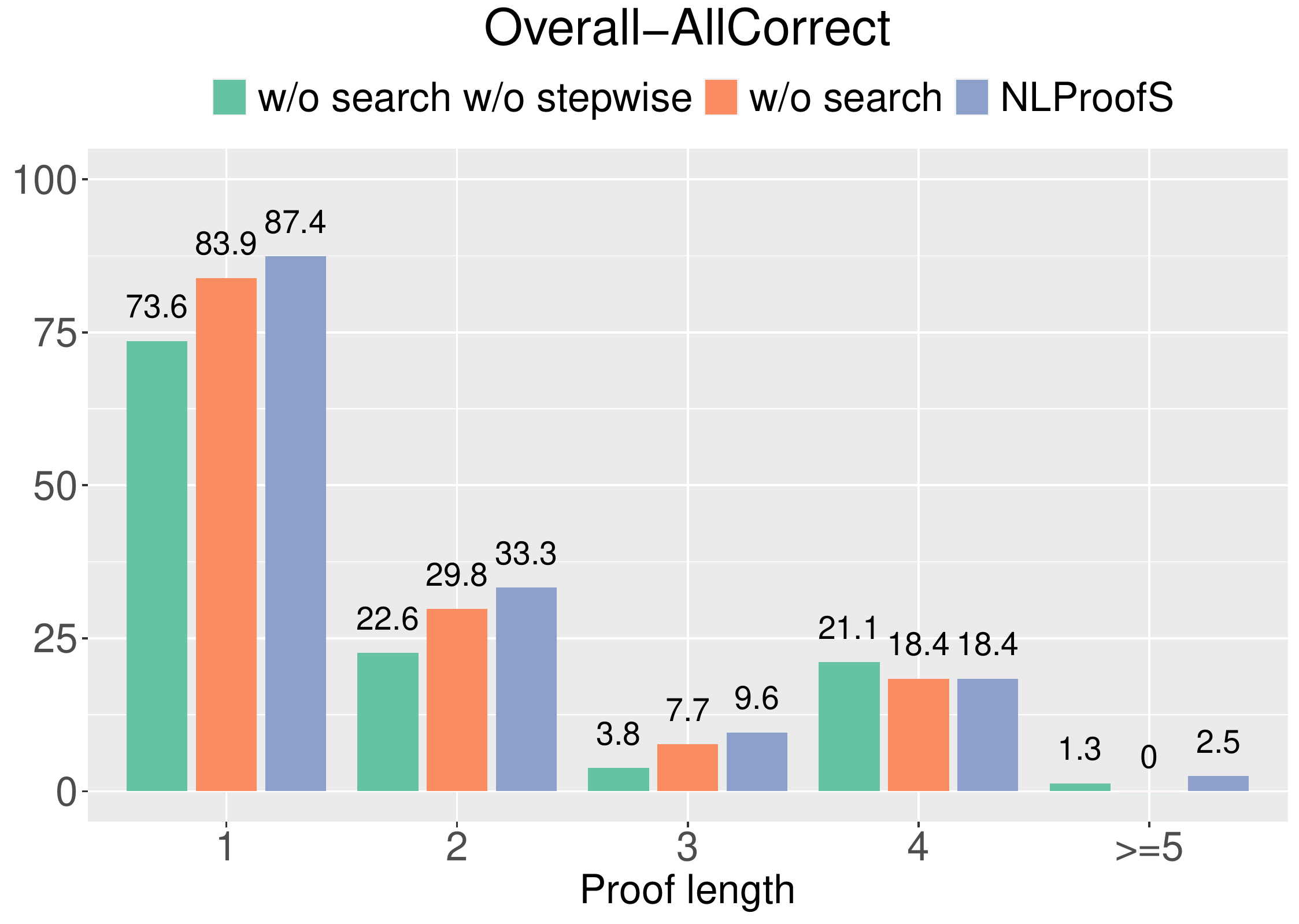}
  \caption{The Overall-AllCorrect metric of test results on task 2 (distractor) broken down by the length of the ground truth proof.
  }
  \label{fig:proof_length_overall_allcorrect}
\end{figure}

We also evaluate on distinguishing valid/invalid hypotheses introduced by \citet{bostrom2022natural}. In this task, the model is given a hypothesis $h$ and supporting facts $C$. But unlike in proof generation, here $h$ can be either valid or invalid w.r.t. $C$. And the model has to classify $h$ as valid/invalid. We use the dataset \citet{bostrom2022natural} constructed from EntailmentBank: Examples with valid hypotheses come directly from EntailmentBank. Examples with invalid hypotheses are constructed by pairing the supporting facts in one example with the hypothesis in another random example.

\name is developed for proof generation, and it has seen only valid hypotheses in training. So we follow \citet{bostrom2022natural} to adapt proof generation systems to this new task: (1) Train the system to generate proofs for valid hypotheses. (2) Apply the system to generate proof scores for both valid and valid hypotheses. (3) Train a linear classifier on top of the scores to predict the validity of hypotheses. It requires the system to be able to produce proof scores. For our method, we use $\texttt{scr}_n(h)$ defined in Eqn~\ref{eqn:proof_score} as the proof score.

Results in Table~\ref{table:bostrom_val} show that our method compares favorably with SCSearch, whereas EntailmentWriter falls behind. The results suggest that proof scores generated by us are more well-calibrated: they are high for valid hypotheses and low for invalid ones. This is largely attributed to our verifier, which prevents the model from hallucinating invalid proofs with confidence.

However, results on this task should be interpreted with caution. First, they do not reflect the performance on proof generation, and SCSearch has not been evaluated on proof generation. Second, none of the methods are explicitly optimized for this task. They see only valid hypotheses during training but are asked to distinguish valid/invalid hypotheses during inference. Third, the particular dataset constructed by \citet{bostrom2022natural} is too easy. An invalid hypothesis has very little lexical overlap with the supporting facts, which can be used as a cue for classifying hypotheses accurately. As a result, a simple RoBERTa baseline directly optimized for classifying the hypothesis can solve this task to almost 100\%.

\section{Additional Experimental Results}
\label{sec:additional_experiments}

\smallsec{Validation results}
Table~\ref{table:entailmentbank_main_val} shows our proof generation results on the validation set of EntailmentBank (Task 2)~\citep{dalvi2021explaining}, corresponding to Table~\ref{table:entailmentbank_main_test}.
Table~\ref{table:ruletaker_d03_val} shows the validation results on RuleTaker (OWA)~\cite{tafjord2021proofwriter}, corresponding to Table~\ref{table:ruletaker_main}. 

\begin{figure*}[ht]
  \centering
  \includegraphics[width=1.0\linewidth]{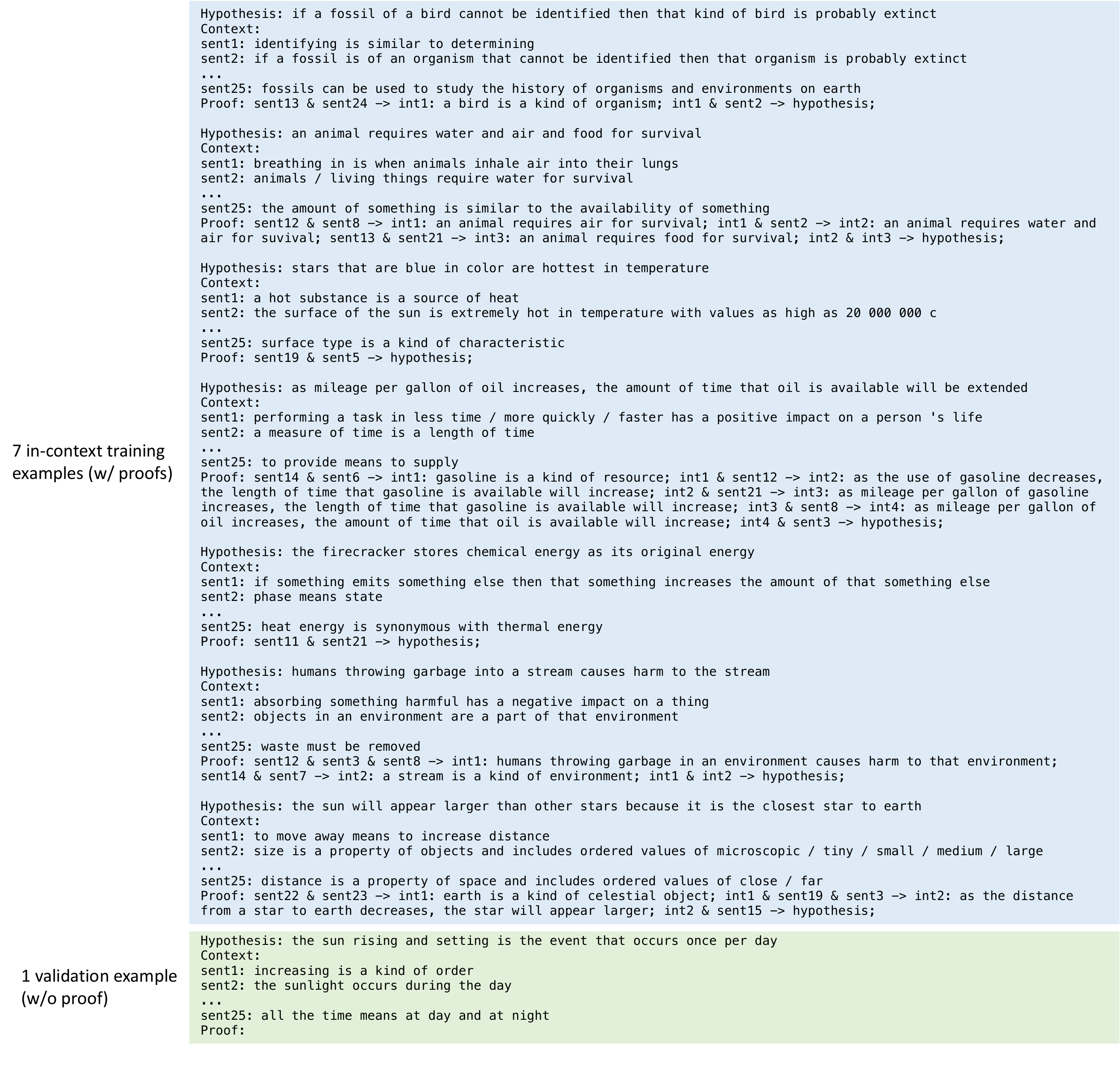}
  \caption{A prompt for GPT-3 and Codex. Each example has 25 supporting facts. We only show 3 for simplicity.}
  \label{fig:prompt}
\end{figure*}

\smallsec{Few-shot prompting with GPT-3 or Codex}
We investigate whether proof generation can be solved out of the box by prompting GPT-3~\citep{brown2020language} or Codex~\citep{chen2021evaluating} with few-shot examples. Fig.~\ref{fig:prompt} shows an example prompt consisting of 7 in-context examples randomly sampled from the training set of EntailmentBank (Task 2), as well as a validation example for which we want to make predictions.

Table~\ref{table:entailmentbank_main_val} includes the results on the full validation set. They were obtained on October 20, 2022 using the model \texttt{text-davinci-002} for GPT-3 and \texttt{code-davinci-002} for Codex. We report the mean and standard deviation from 3 independent runes with different in-context examples in the prompt. GPT-3 and Codex perform substantially worse than other methods, demonstrating that we cannot easily solve proof generation through few-shot prompting. In addition, Codex performs better than GPT-3, which is consistent with the observations in \citet{madaan2022language} though we do not format the output as Python programs.

\smallsec{Test results by different proof length}
Fig.~\ref{fig:proof_length_leaves_f1}, \ref{fig:proof_length_steps_f1}, \ref{fig:proof_length_steps_allcorrect}, \ref{fig:proof_length_intermediates_f1}, \ref{fig:proof_length_intermediates_allcorrect}, and \ref{fig:proof_length_overall_allcorrect} are EntailmentBank (Task 2) test results broken down by proof length (also Fig.~\ref{fig:proof_length_leaves}).

\smallsec{Improving the retriever} 
For Task 3 of EntailmentBank, all methods in Table~\ref{table:entailmentbank_main_test} use the same retrieved supporting facts in \citet{dalvi2021explaining} and focus solely on proof generation. An orthogonal direction is improving the retriever. IRGR~\citep{ribeiro2022entailment} designs a multi-step retriever, which obtains significant improvements on Task 3 (11.8\% on the Overall-AllCorrect metric) but worse results on Task 1 and Task 2 compared to the EntailmentWriter baseline. We do not compare with IRGR, since improving the retriever is orthogonal to our contributions. 

\end{document}